\newtheorem{theorem}{Theorem}
\newtheorem{corollary}[theorem]{Corollary}
\newtheorem{lemma}[theorem]{Lemma}
\newcommand{\prob}{\mathrm{P}}
\newcommand{\vocab}{V}
\newcommand{\seq}{a}
\newcommand*{\seqbm}{\bm{a}}
\newcommand{\seqbmtitle}{\texorpdfstring{$\bm{a}$}{a}}
\newcommand{\seqs}{\Gamma}
\newcommand{\word}{w}
\newcommand{\eos}{\texttt{<END>}\xspace}
\newcommand{\appref}[1]{Supp. \ref{#1}}
\newcommand*{\balancecolsandclearpage}{%
  \close@column@grid
  \cleardoublepage
  \twocolumngrid
}
\begin{document}

\preprint{APS/123-QED}

\title{Correlation Dimension of Natural Language in a Statistical Manifold}

 \author{Xin Du}%
 \email{duxin.ac@gmail.com}
\affiliation{%
 Waseda Research Institute for Science and Engineering,
 Waseda University.}

\author{Kumiko Tanaka-Ishii}
 \email{kumiko@waseda.jp}
 \affiliation{Department of Computer Science and Engineering, School of Fundamental Science and Engineering, Waseda University.}

\date{\today}

\begin{abstract}
The correlation dimension of natural language is measured by applying the Grassberger-Procaccia algorithm to high-dimensional sequences produced by a large-scale language model. This method, previously studied only in a Euclidean space, is reformulated in a statistical manifold via the Fisher-Rao distance. Language exhibits a multifractal, with global self-similarity and a universal dimension around 6.5, which is smaller than those of simple discrete random sequences and larger than that of a Barab\'asi-Albert process. Long memory is the key to producing self-similarity. Our method is applicable to any probabilistic model of real-world discrete sequences, and we show an application to music data.
\end{abstract}

\maketitle

\onecolumngrid
\vskip-3em
\tableofcontents
\twocolumngrid

\balancecolsandclearpage
\section{Introduction}
The correlation dimension of \citet{grassberger1983characterization} quantifies the degree of recurrence in a system's evolution and has been applied to examine the characteristics of sequential data, such as the trajectories of strange attractors \citep{grassberger1983characterization}, random processes \citep{osborne1989finite}, and sequences sampled from complex networks \citep{lacasa2013correlation}.

In this letter, we report the correlation dimension of natural language by regarding texts as the trajectories of a language dynamical system. In contrast to the long-memory quality of natural language as reported in \citep{li1989mutual, altmann2012, plosone16}, the correlation dimension of natural language has barely been studied because of its high dimensionality and discrete nature. An exceptional previous work, to the best of our knowledge, was that of \citet{doxas2010dimensionality}, who measured the correlation dimension of language in terms of a set of paragraphs. Every paragraph was represented as a vector, with each dimension being the logarithm of a word's frequency. The distance between two paragraphs was measured as the Euclidean distance. Such a representation has also been used for measuring other scaling factors of language \citep{acl18, jpc18, ausloos2012measuring}. However, without a rigorous definition of language as a dynamical system, the correlation dimension is difficult to interpret, and its value may easily depend on the setting. For example, the dimension would vary greatly between handling word frequencies logarithmically and nonlogarithmically.

Today, language representation has become elaborate by incorporating semantic ambiguity and long context. {\em Large language models} (LLMs) \citep{radford2019language,openai2023gpt4,touvron2023llama,yi} such as ChatGPT generate texts that are hardly distinguishable from human-generated texts. The generation process is autoregressive, which naturally associates a dynamical system. Such state-of-the-art (SOTA) models (i.e., the GPT series, including GPT-4 \citep{openai2023gpt4}, Llama-2 \citep{touvron2023llama}, and ``Yi'' \citep{yi}) have opened a new possibility of studying the physical nature of language as a complex dynamical system. Furthermore, exploration of the fractal dimension of language offers a novel approach to examine the underlying structures of pretrained neural networks, thus shedding light on the intricate ways they mirror human intelligence.

These new systems, however, are not defined in a Euclidean space and thus require reformulation of the state space and the metric between states. Because a neural model assumes a probability space, the analysis method that was originally defined in a Euclidean space must be accommodated in a space of probability distributions, and the distance metric must be statistical. Specifically, we consider a statistical manifold \citep{rao1992information, amari2012differential} whose metric is the Fisher information metric. Hence, this letter proposes a rigorous formalization to analyze the universal properties of these GPT models, thus representing language as an original dynamical system. Although we report results mainly for language, given the impact of ChatGPT, our formalization applies to any other GPT neural models for real-world sequences, such as DNA, music, programming sources, and finance data. To demonstrate this possibility, we show an application to music.

\section{Method}
Let $(S, d)$ be a metric space and $[x_1,x_2,\cdots,x_N]$ be a point sequence, where $x_t\in S$ for $t=1,\cdots,N$. The Grassberger-Procaccia algorithm \citep{grassberger1983characterization} (GP in the following) defines the correlation dimension of this point sequence in terms of an exponent $\nu$ via the growth of the correlation integral $C(\varepsilon)$, as follows:
\begin{equation}
 C(\varepsilon) \sim \varepsilon^\nu~~~\text{as}~\varepsilon\to 0,
\end{equation}
where
\begin{equation}
 C(\varepsilon) = \lim_{N\to\infty} \frac{1}{N^2} \sum_{1\leq t,s\leq
 N} \#\Bigl\{(t,s): d(x_t, x_s) < \varepsilon\Bigr\},
 \label{eq:corrintegral}
\end{equation}
$\#$ denotes a set's size, and $d$ is the distance metric. In the original GP, the sequence lies in a Euclidean space and $d$ is the Euclidean distance. For an ergodic sequence, the correlation dimension suggests the values of other fractal dimensions such as the Haussdorf dimension \citep{pesin1993rigorous}. For example, the H\'enon map has $\nu=1.21\pm 0.01$ \citep{grassberger1983characterization}, which is close to its Hausdorff dimension of $1.261\pm 0.003$ \citep{russell1980dimension}. GP can be generalized to apply to a sequence in a more general smooth manifold \citep{pesin1993rigorous}.

\begin{figure*}[tbp]
 \centering
 \begin{minipage}[t]{0.42\linewidth}
 \includegraphics[width=\linewidth]{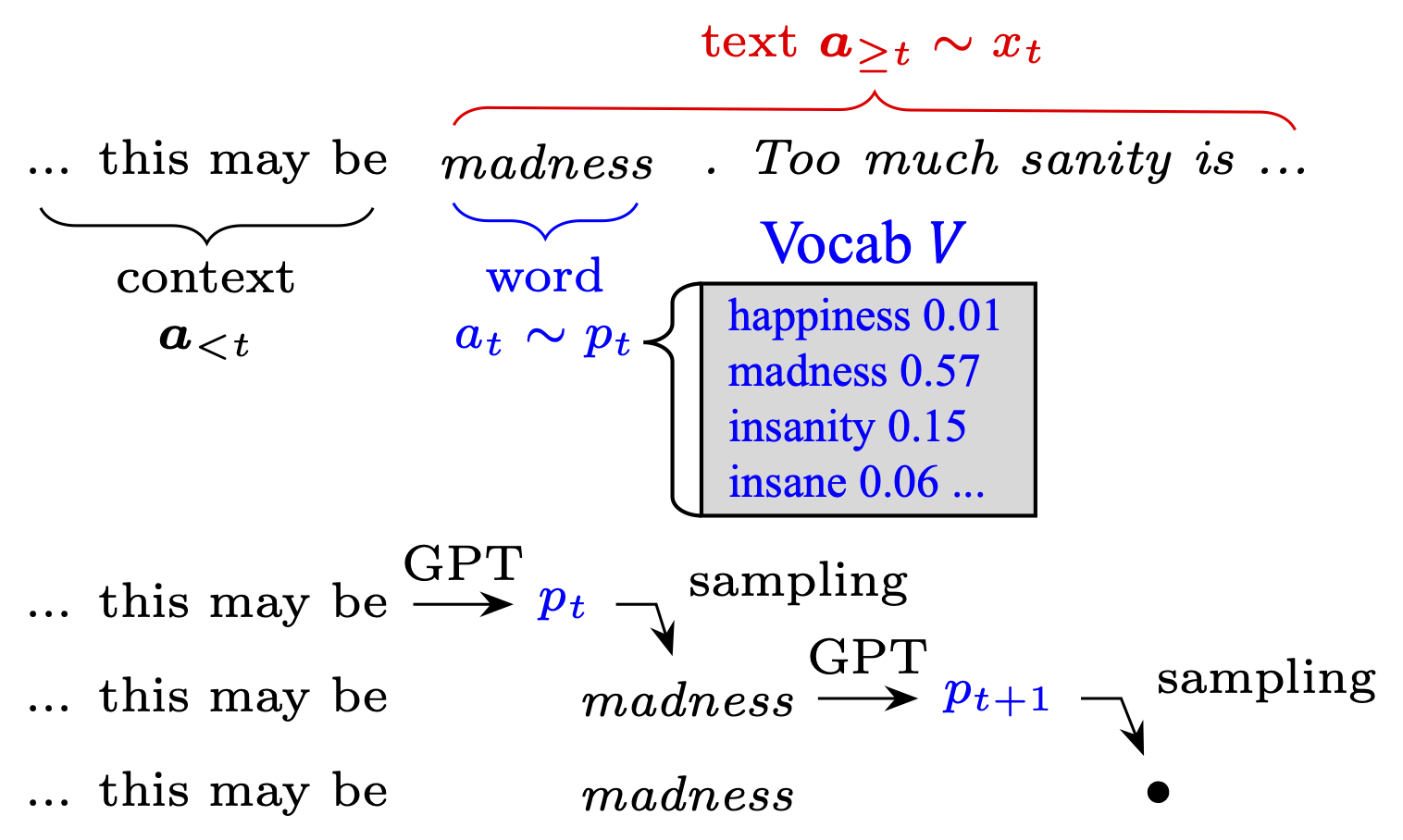}
 \\(a)
 \end{minipage}
 \hspace{0.02\linewidth}
 \begin{minipage}[t]{0.50\linewidth}
 \includegraphics[width=\linewidth]{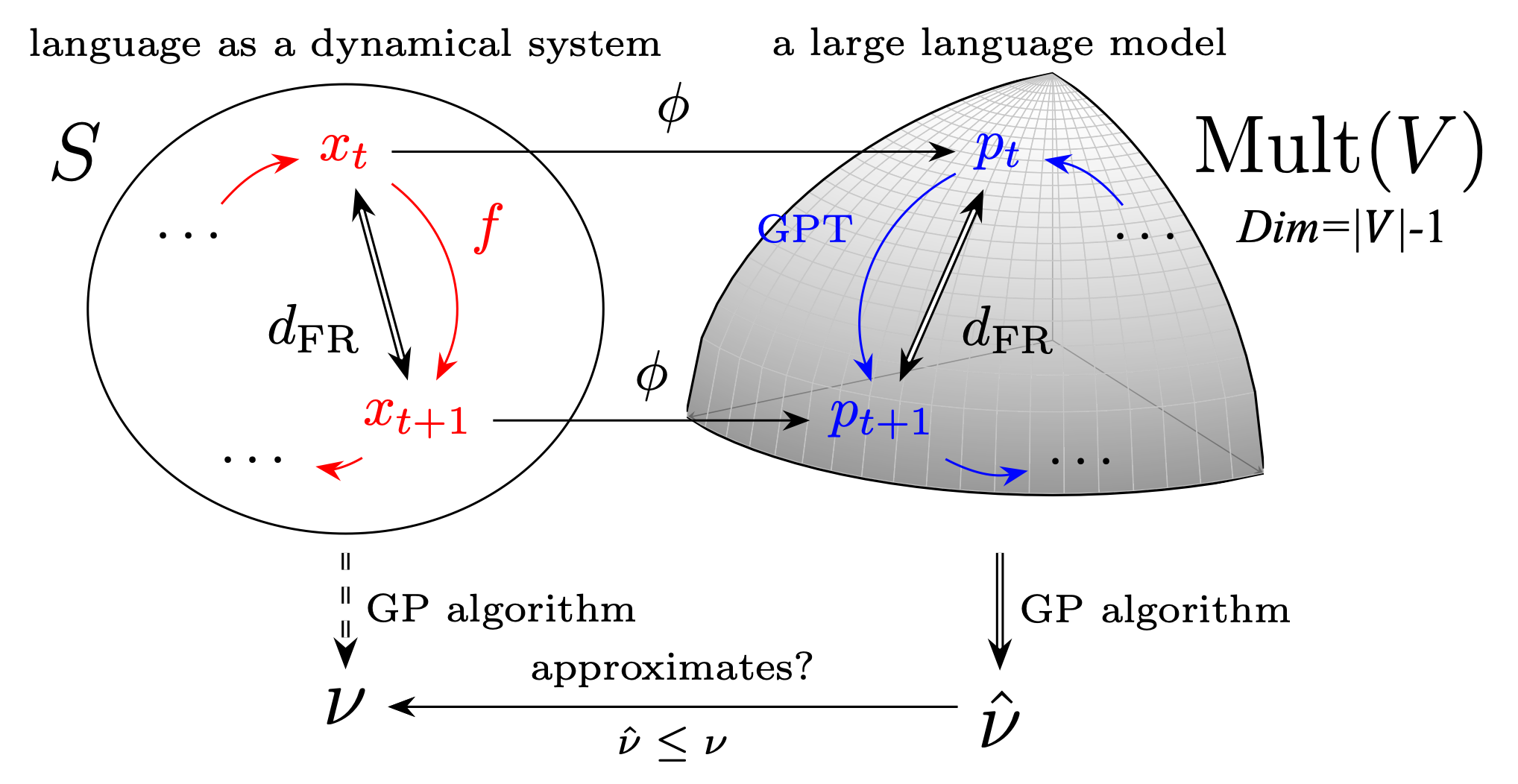}
 \\(b)
 \end{minipage}
\caption{Our model of language as a stochastic dynamical system. (a) The difference between the system state $x_t$ and the next-word probability distribution $p_t$. (b) $\{p_t\}$ (where $p_t\in \text{Mult}(\vocab)$) as the image of $\{x_t\}$ (where $x_t\in S$) through the marginalization mapping $\phi$ in Formula (\ref{eq:phi}). In this study, we use $\hat{\nu}$ to approximate $\nu$. }

 \label{fig:langsys}
\end{figure*}

In our study, we examine natural language through this correlation dimension. Thus far, language texts have typically been considered in a Euclidean space. However, recent large language models have shown unprecedented performance in the form of an autoregressive system, which is defined in a probability space. Hence, we are motivated to measure the correlation dimension in a statistical manifold.

We consider a language dynamical system $\{x_t\}$ that develops word by word: $f: x_t\mapsto x_{t+1}$. Let $\vocab$ represent a vocabulary that comprises all unique words. A sequence of words, $\seqbm=[\seq_1,\seq_2,\cdots,\seq_t,\cdots]$, where $\seq_t\in\vocab$, is associated with a sequence of system states, $[x_1,x_2,\cdots,x_t,\cdots]$. As demonstrated in Figure \ref{fig:langsys}(a) at the top, we define each state $x_t$ as a probability distribution over the set $\seqs$ of all word sequences. $x_t$ measures the probability of any text to occur as $\seqbm_{\geq t}=[\seq_t,\seq_{t+1},\cdots]$, following a {\em context} $\seqbm_{<t}=[\seq_1,\cdots,\seq_{t-1}]$. Furthermore, we consider the next-word probability distribution $p_t$ over the vocabulary $\vocab$. $x_t$ and $p_t$ are formally defined as follows:
\begin{alignat}{2}
 &x_t(\seqbm_{\geq t}) = \prob(\seqbm_{\geq t} \mid \seqbm_{<t}) ~~~~~~ &&\forall \seqbm_{\geq t}\in\seqs,
 \label{eq:xt} \\
 &p_t(\word) = \prob(\seq_{t}=\word \mid \seqbm_{<t}) ~~~~~~ &&\forall \word\in\vocab.
 \label{eq:pt}
\end{alignat}
$p_t$ can be represented as the image of $x_t$ by a mapping $\phi$:
\begin{equation}
 p_t = \phi (x_t). \label{eq:phi}
\end{equation}
Here, $\phi$ is the marginalization across $\seqs$ and is linear with respect to a mixture of distributions, as explained in \appref{sec:linearity}.

Hence, a language state $x_t$ is represented as a probability function instead of a point in a Euclidean space. The correlation dimension $\nu$ can be defined for the sequence $\{x_t\}$ as long as the distance metric $d$ in Formula (\ref{eq:corrintegral}) is specified between any pair of states $x_t$ and $x_s$. However, direct acquisition of $d(x_t,x_s)$ is nontrivial because $\{x_t\}$ as a language is unobservable. One new alternative path today is to represent $x_t$ via $p_t$, where $p_t$ is produced by a large language (especially a GPT-like) model (LLM). We denote the correlation dimension of the sequence $\{p_t\}$ as $\hat{\nu}$. Our approach is summarized in Figure \ref{fig:langsys}(b) at the bottom. \appref{sec:gpt} provides a brief introduction to GPT-like LLMs.

Theoretically, $\hat{\nu}=\nu$ when the sequence of words is generated by a Markov process. We prove this in \appref{sec:markov}. Natural language exhibits the Markov property to a certain extent, but strictly speaking, it violates the property. This phenomenon has been studied in terms of long memory \citep{li1989mutual,altmann2009, altmann2012, plosone16}, as mentioned in the Introduction. Therefore, the $\hat{\nu}$ acquired from ${p_t}$ will remain an approximation of $\nu$. In general, $\hat{\nu}\leq \nu$ holds \cite{peitgen1992chaos} and $\hat{\nu}$ thus constitutes a lower bound of $\nu$.

The distance metric $d$ in Formula (\ref{eq:corrintegral}) is chosen as the Fisher-Rao distance, defined as the geodesic distance on a statistical manifold generated by Fisher information \citep{amari2012differential}. When $\{p_t\}$ is presumed to follow a multinoulli distribution (over the vocabulary $\vocab$), the statistical manifold is the space of all multinoulli distributions over $\vocab$, denoted as $\text{Mult}(\vocab)$, as shown at the top right in Figure \ref{fig:langsys}(b). $\text{Mult}(\vocab)$ has a (topological) dimension of $|\vocab|-1$ and is isometric to the positive orthant of a hypersphere. The Fisher-Rao distance is analytically equal to twice the Bhattacharyya angle, as follows:
\begin{equation}
\begin{aligned}
 d_\text{FR}(p_t, p_s) = 2 \arccos \left(
 \sum_{\word\in\vocab} \sqrt{p_t(\word) p_s(\word)}
 \right) \\
 ~~~~\text{for}~t,s=1,2,\cdots,N.
 \label{eq:fisher-rao}
\end{aligned}
\end{equation}
This statistical manifold is a Riemannian manifold of constant curvature (as it constitutes a part of a hypersphere), sharing many favorable topological properties with Euclidean spaces. Particularly, the Marstrand projection theorems \citep{marstrand1954some,falconer2004fractal} for Euclidean spaces, which state that linear mappings almost surely preserve a set's Hausdorff dimension, can be generalized to such Riemannian manifolds. Recently, \citet{balogh2016dimensions} proved Marstrand-like theorems for sets on a 2-sphere. Because the mapping $\phi: x_t\mapsto p_t$ is linear, as mentioned before and proved in \appref{sec:linearity}, these theorems could be generalized to suggest the equality $\nu=\hat{\nu}$. This possible generalization goes beyond this letter's scope; even if it were true, Marstrand-like theorems do not guarantee a specific linear mapping (i.e., $\phi$) to be dimension-preserving. Nevertheless, these theorems motivate our proposal to analyze $\nu$ via its lower bound $\hat{\nu}$.

The calculation of distances over $N$ timesteps takes $O(|\vocab|\cdot N^2)$ time, with a vocabulary size $|\vocab|$ around $10^4$. This computational cost can be reduced to $O(M\cdot N^2)$ through dimension reduction from $\{p_t\}$ to $\{q_t\}$, without altering the estimated correlation dimension $\hat{\nu}$, where $M\ll |\vocab|$ is the new, smaller dimensionality. For $t=1,\cdots,N$, the dimension-reduction projection transforms $p_t$ to $q_t$ as follows:
 \begin{equation}
 q_t(m) = \sum_{\word\in\Phi^{-1}(\{m\})} p_t(\word),~~~\forall m=1,\cdots,M.
 \label{eq:groupprob}
 \end{equation}
Here, $\Phi$ is determined via the modulo function: $\Phi(\word) =\text{index}(\word)~\text{mod}~M$, where $\text{index}(\word)$ indicates a word's index in the vocabulary. Essentially, we ``randomly'' group words from the extensive vocabulary $\vocab$ in a smaller set $\{1,\cdots,M\}$ and estimate $\hat{\nu}$ according to this condensed vocabulary. We empirically validated this method, which is rooted in Marstand's projection theorem, as detailed in \appref{sec:dimreduce}. Specifically, dimensionality reduction from approximately 50,000 to 1,000 retained the consistency of estimating $\hat{\nu}$ and achieved up to 50X faster computation.

\section{Results}
\begin{figure*}[tb]
\centering
\begin{minipage}{1.0\linewidth}
\includegraphics[width=\linewidth]{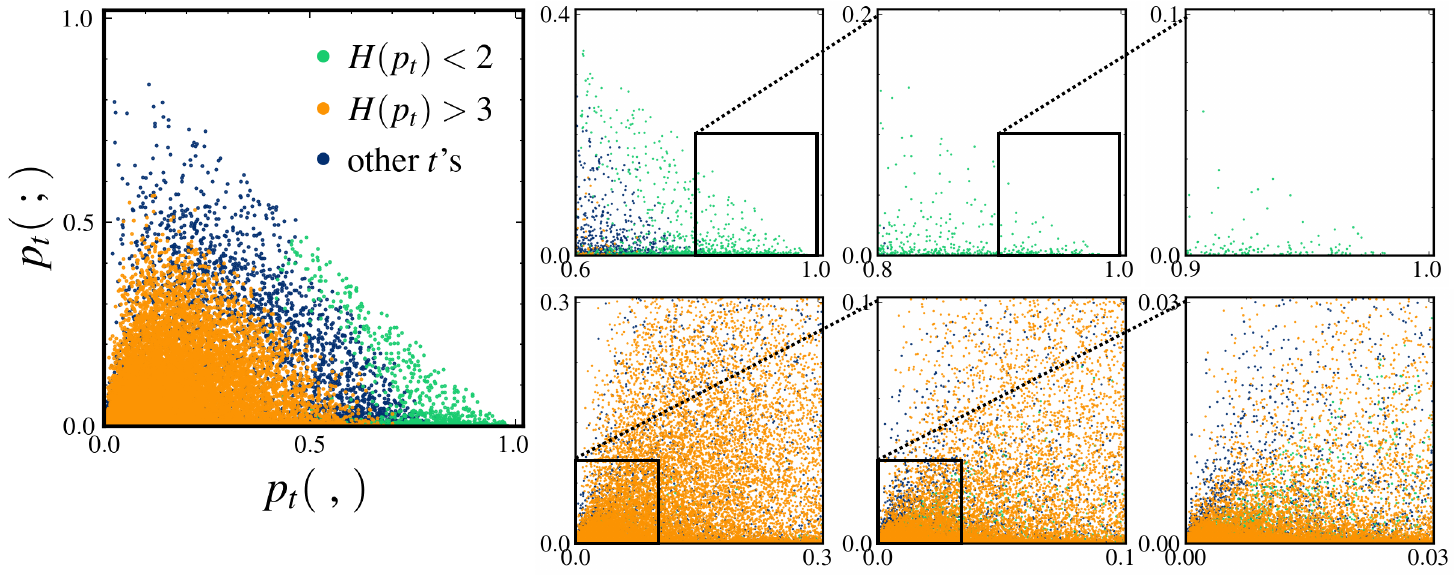}
\end{minipage}
\caption{Sequence of distributions $p_t$ underlying the words in {\em Don Quixote}, as visualized for words ``,'' (comma) and ``;'' (semicolon). Each point represents one timestep. The green points represents timesteps at which $p_t(\text{``,''})$ dominates and the Shannon entropy $H(p_t) < 2.0$, whereas the orange points correspond to high-entropy states with $H(p_t) > 3.0$. Self-similar patterns are observed in both the green and orange regions. \label{fig:p}}
\end{figure*}

Before showing the correlation dimension, we examine language's inherent self-similarity. Figure \ref{fig:p} includes a plot showing the probability $p_t$ of encountering ``,'' (commas) and ``;'' semicolons over $t=1,2,\cdots,N$ in an English translation of {\em Don Quixote} by Miguel de Cervantes from Project Gutenberg \footnote{\url{https://www.gutenberg.org/ebooks/996}}. These punctuation marks, chosen for their high frequency, illustrate the role of semantic ambiguity. Each $p_t$ represents a point in $\text{Mult}(\vocab)$, a probability vector of the next-word occurrence, estimated using GPT2-\texttt{xl} \citep{radford2019language}. The figure maps these points, varying with input context $\seqbm_{<t}$, and classifies them by Shannon entropy $H(p_t)$, revealing self-similarity in both low- and high-entropy regions through magnified views at different scales. Nevertheless, a thorough assessment of this self-similarity necessitates examining the high-dimensional space of $\text{Mult}(\vocab)$, beyond the limits of a two-dimensional display that cannot represent correlation dimensions above 2.

We conjecture that the trajectory has two kinds of fractals: local
and global. The local fractals, potentially arising from simple word
distributions across contexts akin to those in topic models like LDA
\citep{blei2003latent}, are evident in low-entropy areas where single
words predominate. In \appref{sec:local-dirichlet}, we show that even
i.i.d. samples from a Dirichlet distribution (a commonly assumed
prior for multinoulli distributions) can reproduce the local fractal
seen in Figure \ref{fig:p}. The local kind's occurrence could
be related to the finding in \citet{doxas2010dimensionality} that
topic models can reproduce self-similar patterns. However, the
local kind is not especially concerned in this letter because it
characterizes single words and hence does not reveal the nature of
the original system $\{x_t\}$.

In this letter, we are mainly interested in the correlation dimension of the global phenomenon. Unlike the local kind, the global fractals represent high-entropy regions that are governed by the trajectory's global development. Hence, we consider points in the higher-entropy region, as filtered by a parameter $\eta$:
\begin{equation}
 \max_{\word\in \vocab} p_t(\word) < \eta.
 \label{eq:maxprob}
\end{equation}

\begin{figure*}[tb]
 \centering
 \begin{minipage}{0.3\linewidth}
 \includegraphics[width=\linewidth]{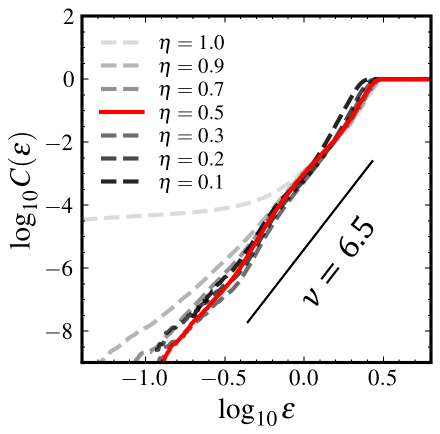}
 (a)
 \end{minipage}
 \begin{minipage}{0.3\linewidth}
 \includegraphics[width=\linewidth]{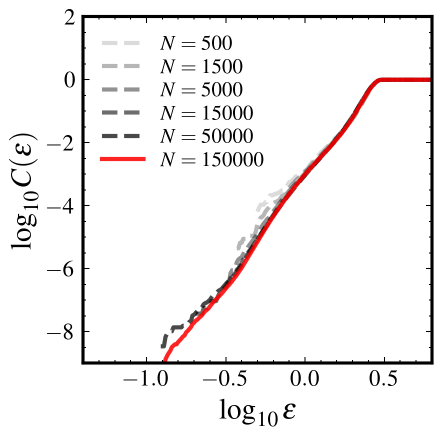}
 (b)
 \end{minipage}
 \begin{minipage}{0.3\linewidth}
 \includegraphics[width=\linewidth]{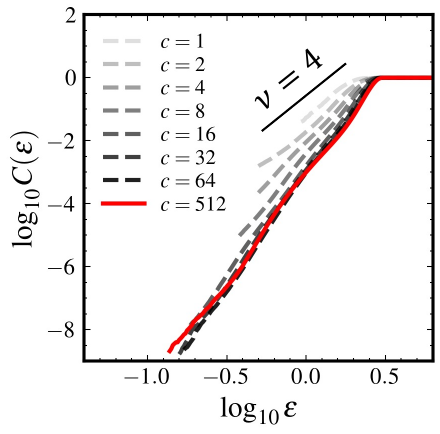}
 (c)
 \end{minipage}
\caption{Correlation integral curves as defined by Formula (\ref{eq:corrintegral}) and estimated with GPT2-\texttt{xl} with respect to (a) the maximum-probability threshold $\eta$ in Formula (\ref{eq:maxprob}), (b) the sequence length $N$, and (c) the context length $c$ in Formula (\ref{eq:ctxlen}). \label{fig:donquixote}}
\end{figure*}

Figure \ref{fig:donquixote}(a) shows the correlation integral from Formula (\ref{eq:corrintegral}) with respect to $\varepsilon$ for {\em Don Quixote} in terms of different probability thresholds $\eta$ in Formula (\ref{eq:maxprob}). As $\eta$ decreases to 0.5 (red curve), the linear region becomes visible across all scales, and the correlation dimension (given by the slope) converges to $\hat{\nu}=6.42$. In contrast, the curve for $\eta=1.0$ (i.e., when no timesteps are excluded) shows great deviation from the other curves, especially at smaller $\varepsilon$ values, producing a local correlation dimension that drops below 2.0. Hence, unless mentioned otherwise, $\eta=0.5$ in this letter. For $\eta=0.5$, Figure \ref{fig:donquixote}(a) shows a long span across more than six orders of magnitude, from $10^{-1}$ to $10^{-8}$ on the vertical axis.

Figure \ref{fig:donquixote}(b) characterizes the effect of $N$, the length of the text used to estimate the correlation dimension. The longest text fragment had 150,000 words and is indicated by the red curve. Convergence is visible for all $N$, starting from $N=500$. Unless mentioned otherwise, $N=150,000$ here.

We also investigated the effect of the context length, denoted as $c$. Ideally, an LLM estimates the distribution $p_t$ by using the whole text $[\seq_1,\cdots,\seq_{t-1}]$ before timestep $t$ as the context, but in practice, a maximum context length $c$ is often set; that is,
\begin{align}
 p_t^{(c)}(\word) &= \prob
 (\seq_t=\word\mid \seq_{t-c}, \seq_{t-c+1}, \cdots, \seq_{t-1}) \nonumber \\
 &\approx p_t(\word) ~~~~\forall \word\in\vocab.
 \label{eq:ctxlen}
\end{align}
Unless mentioned otherwise, all results in this letter were obtained with $c=512$.

Figure \ref{fig:donquixote}(c) shows the correlation dimension with
values of $c$ as small as 1 (i.e., a Markov model). For context
lengths above 32, a clear linear scaling phenomenon is observed
across all scales, which resembles the case of $c=512$. As $c$
decreases, the linear-scaling region becomes narrower and the
self-similarity becomes less evident. Dependency of the
correlation dimension on $c$ is seen only for the global fractal,
whereas the dimension is consistent across $c$ values for the local
fractals, as detailed in \appref{sec:local-ctxlen}.

This difference in the behavior of local and global fractals suggests
a fundamental difference between these two kinds. The local fractal
does not depend on $c$, whereas the global fractal requires large $c$
to appear. While the local fractal may stem from mixed word-frequency
distributions in topic models, as observed by
\citet{doxas2010dimensionality} and mentioned above, the global
fractal is due to long memory and was anticipated in the literature
\citep{li1989mutual, altmann2012, plosone16}. Although
self-similarity and long memory have often been studied separately
and were even conjectured as different aspects of a scale-invariant
process \citep{abry2003self}, they show interesting coordination for
natural language. More results on a larger dataset are provided in
\appref{sec:ctxlen}.

To further investigate the properties of natural language, we conducted a larger-scale analysis of long texts, which were divided into two groups: books in multiple languages and English articles in multiple genres, as detailed in \appref{sec:data}. The first group included 144 single-author books from Project Gutenberg and Aozora Bunko, comprising 80 in English, 32 in Chinese, 16 in German, and 16 in Japanese. The second group included 342 long English texts from different sources. We obtained all the results in this large-scale analysis by applying the dimension-reduction method given in Formula (\ref{eq:groupprob}).

\begin{figure*}[tb]
 \centering
 \begin{minipage}[t]{0.85\linewidth}
 \begin{minipage}[t]{0.43\linewidth}
 \includegraphics[width=\linewidth,valign=t]{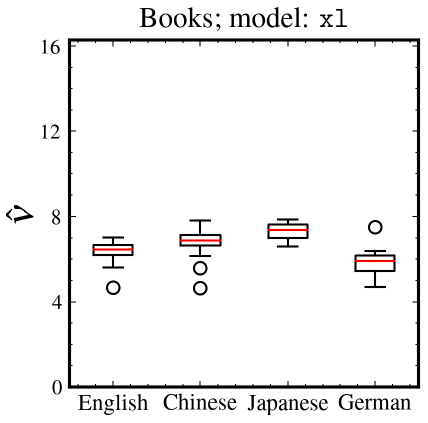}
 (a)
 \end{minipage}
 \begin{minipage}[t]{0.54\linewidth}
 \includegraphics[width=\linewidth,valign=t]{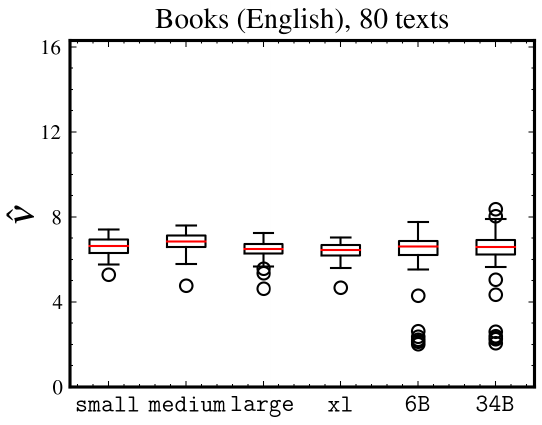}
 (b)
 \end{minipage}
 \begin{minipage}[t]{0.95\linewidth}
 \includegraphics[width=\linewidth,valign=t]{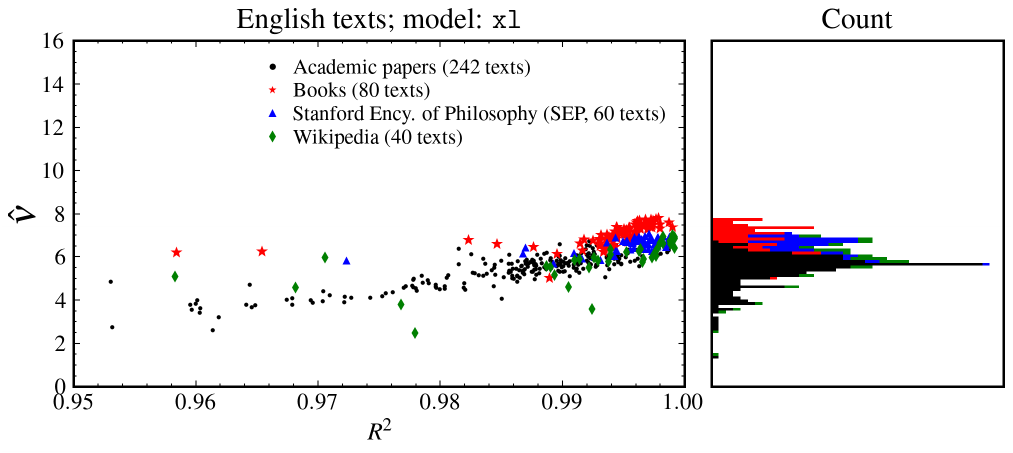}
 (c)
 \end{minipage}
 \begin{minipage}[t]{0.30\linewidth}
 \includegraphics[width=\linewidth,valign=t]{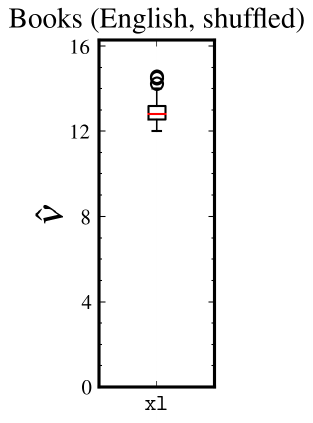}
 (d)
 \end{minipage}
 \begin{minipage}[t]{0.25\linewidth}
 \includegraphics[width=\linewidth,valign=t]{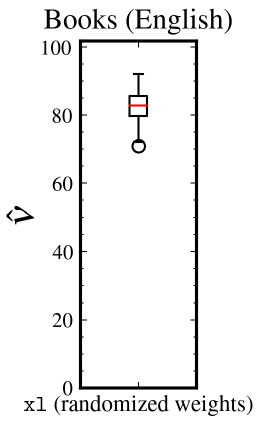}
 (e) \end{minipage}
 \end{minipage}
 \\ \vskip 1em 
\caption{Correlation dimensions of (a) all books grouped by language, as estimated using GPT2-\texttt{xl}; (b) English books as estimated using GPT with different model sizes (GPT2 from \texttt{small} to \texttt{xl} and the Yi model for \texttt{6b} and \texttt{34b}); (c) English texts from various sources with the $R^2$ scores (horizontal axis) of their linear fits to the correlation integral curves; (d) shuffled English books evaluated with GPT2-\texttt{xl}; and (e) English books evaluated with weight-randomized GPT2-\texttt{xl}.}
\label{fig:corrdim}
\end{figure*}

Figures \ref{fig:corrdim} (a) and (b) show the large-scale results on the books for the correlation dimension $\hat{\nu}$ with respect to (a) different languages and (b) various model sizes. The former results (a) were produced using the GPT2 model of size \texttt{xl} (denoting ``extra-large''), with $\approx 10^9$ parameters. For the latter results (b), we tested models of different sizes from \texttt{small} ($\approx 10^6$ parameters) to \texttt{34B} ($3.4\times 10^{10}$). For the sizes up to \texttt{xl}, we used the GPT2 model; for \texttt{6B} and \texttt{34B}, we used the Yi model \cite{yi}, which offers the SOTA capability in English among all publicly available LLMs. For all tested model sizes, the average correlation dimension remains constant. Outliers occur more frequently for the two Yi models (\texttt{6B} and \texttt{34B}), which was possibly due to those models' use of a lower numerical precision (16-bit floating-point numbers).

Hence, for all languages, an average correlation dimension of around $\hat{\nu}=6.5$ is observed: $6.39\pm 0.40$ for English, $6.81\pm 0.58$ for Chinese, $7.30\pm 0.41$ for Japanese, and $5.84\pm 0.70$ for German ($\pm$ indicates the standard deviation). These results suggest the possible existence of a common dimension for natural language, with a lower bound of 6.5 under our settings.

Figure \ref{fig:corrdim}(c) shows the correlation dimension (vertical axis) for English texts in four genres: books, academic papers \citep{Kershaw2020ElsevierOC}, the Stanford Encyclopedia of Philosophy (SEP) \footnote{\url{https://plato.stanford.edu/}}, and Wikipedia webpages. For each text, the horizontal axis indicates the coefficient of determination, $R^2$, for the correlation integral curve's linear fit. A larger $R^2$ value (maximum 1) implies more significant self-similarity in a text. The right side of (c) shows the distribution of the dimension values grouped by genre.

As seen in the figure, most texts have a correlation dimension around 6, especially those estimated with high $R^2$ scores. The SEP texts (blue) have the most concentrated range of dimensions, at $6.57\pm 0.32$ with $R^2>0.99$ for over 90\% of the texts. In contrast, the academic papers (black) show the most scattered distribution of the correlation dimension. This is deemed natural, as the SEP texts have the highest quality, whereas the academic papers include irregular notations such as chemical and mathematical formulas, which obscure a text's self-similarity.

The universal correlation dimension value, $\nu\approx 6.5$, can be understood through the lens of the ``information dimension'' \citep{farmer1982information}, which coincides with $\nu$ under ergodic conditions \citep{pesin1993rigorous}. The information dimension reflects how information, or the log count of unique contexts, scales with the statistical manifold's resolution. Contexts are deemed the same if their $p_t$ values are indistinguishably close within a certain threshold. Essentially, doubling the resolution would reveal about $2^{6.5}\approx 90$ times more distinct contexts that were previously considered identical. Therefore, $\nu$ quantifies the average ``redundancy'' in the diversity of texts conveying similar messages.

We also compared several theoretical random processes. As analyzed using a GPT2-\texttt{xl} model and shown in Figure \ref{fig:corrdim}(d), shuffled word sequences exhibited an average correlation dimension of 13.0, indicating inherent self-similarity despite the shuffling. As seen in Figure \ref{fig:corrdim}(e), randomization of the GPT2-\texttt{xl} model's weights significantly increased the correlation dimensions to an average of 80. This result suggests purely random outputs, unlike text shuffling, which retains some linguistic structures, like a bag-of-words approach.

Analyses of additional random processes, as detailed in
\appref{sec:randomprocs}, showed that a uniform white-noise process
on the statistical manifold $S$ yielded a correlation dimension over
100. Symmetric Dirichlet distributions in high-entropy regions
consistently produced dimensions above 10. Conversely,
Barab\'asi-Albert (BA) networks \citep{barabasi1999emergence}, which
are special cases of a Simon process, demonstrated a correlation
dimension of $2.00\pm 0.003$, and a fractal variant
\citep{rak2020fractional} produced $2\sim 3.5$. In terms of
complexity via the correlation dimension, this places natural
language above BA networks but below white noise.

In \appref{sec:euclid}, we further investigate the relationship between the statistical manifold and conventional Euclidean spaces with respect to the correlation dimension. For BA models, the dimension remains the same whether measured in a Euclidean space or the manifold, thus emphasizing the comparability. However, language data reveals a different story: Euclidean metrics yield compromised linearity in comparison to Fisher-Rao metrics, thus underscoring that the Fisher-Rao distance more accurately captures language's inherent self-similarity.

Recently, LLMs have also been developed for processing data beyond natural language, and one successful example is for acoustic waves compressed into discrete sequences \citep{copet2023simple}. To demonstrate the applicability of our analysis, we used the \texttt{GTZAN} dataset \citep{tzanetakis2002musical}, which comprises 1000 recorded music pieces categorized in 10 genres. Briefly, we observed clear self-similarity in the compressed music data. The correlation dimension was found to depend on the genre: classical music showed the smallest dimension at $5.44\pm 1.13$, much smaller than the dimensions for metal music at $7.27\pm 0.96$ and rock music at $7.42\pm 0.87$. None of the music genres showed a correlation dimension as large as that of white noise, as mentioned previously, even though the analysis was based on recorded data. The details of this analysis are given in \appref{sec:musicgen}.

In closing, we recognize this study's limitation of viewing text as a dynamical system akin to the GPT model, which overlooks the potential of representing words as leaf nodes in a syntactic tree, as suggested by generative and context-free grammars (CFGs) \citep{chomsky2014aspects}. Although promising, that complex linguistic framework exceeds our current scope, and we expect to explore it in the future.

\begin{acknowledgments}
This work was supported by JST CREST Grant Number JPMJCR2114
and JSPS KAKENHI Grant Number JP20K20492.
\end{acknowledgments}

\bibliography{main}

\newpage
\balancecolsandclearpage
\appendix
\onecolumngrid

\section{Properties of the Mapping $\phi:x_t\mapsto p_t$}
\subsection{Formulation}
\label{sec:phi}
In the main text, we consider two sequences of probability distributions, i.e., $\{x_t\}$ and $\{p_t\}$, which are related by the mapping $\phi:x_t\mapsto p_t$ in \ref{eq:phi}. We explain the formulation of $\phi$ here.

Recall that $x_t$ denotes the language dynamical system's state at timestep $t$ and is defined as a distribution over the set $\seqs$ of all sequences of words, as in \ref{eq:xt}. Then, $p_t$ is defined over the vocabulary $\vocab$ and characterizes the probability of a word $w$ to occur as the next word following a given context. The probability $p_t(\word)$ for any $\word\in\vocab$ is defined as the probability that an arbitrary closed text has $\word$ as its first word, thus giving the following formulation of $\phi$:
\begin{equation}
  p_t(\word) =
  \phi(x_t)(\word) :=
  \sum_{\substack{\seqbm_{\geq t}\in\seqs \\ \seq_t=\word }}
  x_t(\seqbm_{\geq t})
  ~~~~~~\forall \word\in\vocab.
  \label{eq:phi2}
\end{equation}

$x_t$ and $p_t$ are defined over different sets but are essentially consistent with respect to the same probability measure $\mu$ on a probability space $(\seqs, \mathcal{F}, \mu)$, where $\mathcal{F}=\{\Lambda : \Lambda\subset \seqs\}$ denotes the power set of $\seqs$. Here, $\mu: \mathcal{F}\to[0,1]$ is defined as follows:
\begin{equation}
  \mu(\Lambda) = \sum_{\seqbm_{\geq t}\in \Lambda} x_t(\seqbm_{\geq t}),
  ~~~~~~\forall \Lambda\subset\seqs.
\end{equation}
Hence, $x_t$ are $p_t$ are both consistent with respect to $\mu$:
\begin{alignat}{2}
  &x_t(\seqbm_{\geq t}) = \mu(\{\seqbm_{\geq t}\}) ~~~~~~&& \forall\seqbm_{\geq t}\in\seqs, \\
  &p_t(\word) = \mu\left(
    \coprod_{\substack{\seqbm_{\geq t}\in\seqs \\ \seq_t=\word}} \{\seqbm_{\geq t}\}
    \right) ~~~~~~&& \forall \word\in\vocab.
\end{alignat}

\subsection{Linearity}
\label{sec:linearity}
The mapping $\phi$ is linear with respect to the mixture of probability distributions within $\{x_t\}$. That is, for any two distributions $x_t, x_s$, and any mixture weight $\alpha\in[0,1]$, $\phi$ satisfies the following:
\begin{equation}
  \phi\bigl(\alpha x_t + (1-\alpha) x_s\bigr) = \alpha \phi(x_t) + (1-\alpha) \phi(x_s).
\end{equation}

This equality can be obtained directly from the definition of $\phi$ in Formula (\ref{eq:phi2}). The left side of Formula (\ref{eq:phi2}) is calculated as follows for any $\word\in\vocab$:
\begin{align}
  \phi\bigl(\alpha x_t + (1-\alpha) x_s \bigr)(\word)
  &= \sum_{\substack{\seqbm_{\geq t}\in\seqs \\ \seq_t=\word}}
  \bigl(\alpha x_t + (1-\alpha) x_s\bigr)(\seqbm_{\geq t}) \\
  &= \sum_{\substack{\seqbm_{\geq t}\in\seqs \\ \seq_t=\word}}
  \Bigl(
  \alpha x_t(\seqbm_{\geq t}) + (1-\alpha) x_s(\seqbm_{\geq t})
  \Bigr) \\
  &= \alpha \sum_{\substack{\seqbm_{\geq t}\in\seqs \\ \seq_t=\word}} x_t(\seqbm_{\geq t})
  + (1-\alpha) \sum_{\substack{\seqbm_{\geq s}\in\seqs \\ \seq_s=\word}} x_s(\seqbm_{\geq s}) \\
  &= \alpha p_t(\word) + (1-\alpha) p_s(\word).
\end{align}

\subsection{Distance Distortion Rate}
\label{sec:distortion}
In this work, we are especially interested in how the mapping $\phi: x_t\mapsto p_t$ would distort the Fisher-Rao distance between any two states $x_t$ and $x_s$. The distortion is measured by the following rate:
\begin{equation}
  r(x_t, x_s) \equiv d_\text{FR}(x_t,x_s)/d_\text{FR}(p_t,p_s),
  \label{eq:distortion}
\end{equation}
where $p_t=\phi(x_t)$ and $p_s=\phi(x_s)$ as defined in Formula (\ref{eq:phi2}).

In this section, we show that $r(x_t, x_s)\geq 1$ in general, i.e., it has a lower bound of 1.

\begin{lemma}
(Lower Bound of the Distance Distortion Rate) The distortion rate $r(x_t,x_s)$ is no smaller than 1 for any $x_t$ and $x_s$.
  \label{thm:lowerbound}
\end{lemma}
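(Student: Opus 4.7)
The plan is to reduce everything to the Bhattacharyya coefficient and invoke Cauchy--Schwarz on the fibers of the marginalization map $\phi$. Since $d_\text{FR}(p_t,p_s) = 2\arccos\, BC(p_t,p_s)$ where $BC(p,q) := \sum_{\word} \sqrt{p(\word)q(\word)}$, and $\arccos$ is monotonically decreasing on $[0,1]$, the inequality $r(x_t,x_s)\geq 1$ is equivalent to showing $BC(p_t,p_s)\geq BC(x_t,x_s)$, where the coefficient on the right is computed as the sum over $\seqbm_{\geq t}\in\seqs$. In other words, I want to show that coarse-graining (marginalizing) the state distribution $x_t$ into the next-word distribution $p_t$ can only enlarge the Bhattacharyya overlap, hence shrink the Fisher-Rao distance.

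The main computation proceeds as follows. Using the definition of $\phi$ in Formula (\ref{eq:phi2}), write
\begin{equation}
BC(p_t,p_s) = \sum_{\word\in\vocab} \sqrt{\Bigl(\sum_{\seqbm_{\geq t}:\seq_t=\word} x_t(\seqbm_{\geq t})\Bigr)\Bigl(\sum_{\seqbm_{\geq t}:\seq_t=\word} x_s(\seqbm_{\geq t})\Bigr)}.
\end{equation}
Applying the Cauchy--Schwarz inequality $\sqrt{(\sum_i \alpha_i)(\sum_i \beta_i)}\geq \sum_i\sqrt{\alpha_i\beta_i}$ to each fiber $\{\seqbm_{\geq t}\in\seqs : \seq_t=\word\}$ with $\alpha=x_t(\seqbm_{\geq t})$ and $\beta=x_s(\seqbm_{\geq t})$, and then summing over $\word\in\vocab$, the double sum collapses to
\begin{equation}
BC(p_t,p_s) \geq \sum_{\word\in\vocab}\sum_{\seqbm_{\geq t}:\seq_t=\word}\sqrt{x_t(\seqbm_{\geq t})\,x_s(\seqbm_{\geq t})} = \sum_{\seqbm_{\geq t}\in\seqs}\sqrt{x_t(\seqbm_{\geq t})\,x_s(\seqbm_{\geq t})} = BC(x_t,x_s),
\end{equation}
since the fibers $\{\seqbm_{\geq t}:\seq_t=\word\}$ partition $\seqs$. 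Applying $2\arccos(\cdot)$ to both sides and using its monotonicity yields $d_\text{FR}(p_t,p_s)\leq d_\text{FR}(x_t,x_s)$, which is the claim.

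There is no real obstacle here; the only thing to be careful about is that the Bhattacharyya coefficients lie in $[0,1]$ so that $\arccos$ is well-defined and monotone, which is automatic since $x_t, x_s$ (resp.\ $p_t, p_s$) are probability distributions on $\seqs$ (resp.\ $\vocab$) and Cauchy--Schwarz already gives $BC\leq 1$. One should also remark on the case $d_\text{FR}(p_t,p_s)=0$, for which the ratio in Formula (\ref{eq:distortion}) is formally undefined: this only occurs when $p_t=p_s$, in which case Cauchy--Schwarz forces $x_t=x_s$ on each fiber (under the standard equality condition) so $d_\text{FR}(x_t,x_s)=0$ as well, and the bound $r\geq 1$ can be interpreted in the limiting sense. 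This same argument also pinpoints when the bound is tight, namely when, within each fiber, $x_t$ and $x_s$ are proportional --- corresponding to the Markov situation flagged in the main text, in which $\hat\nu=\nu$ is expected.
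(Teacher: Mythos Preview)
Your main argument is correct and essentially the same as the paper's: the paper factors $x_t(\bm{b})=p_t(b_1)\,x_{t+1}(\bm{b}_{\geq 2}\mid b_1)$ via the chain rule and then bounds the conditional Bhattacharyya coefficient $\sum_{\bm{b}_{\geq 2}}\sqrt{x_{t+1}(\bm{b}_{\geq 2}\mid b_1)\,x_{s+1}(\bm{b}_{\geq 2}\mid b_1)}\le 1$, which is precisely the Cauchy--Schwarz step you apply directly to the unnormalized fibers of $\phi$.

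One correction, though: your remark on the degenerate case $d_\text{FR}(p_t,p_s)=0$ is wrong. The condition $p_t=p_s$ does \emph{not} force $x_t=x_s$. The Cauchy--Schwarz equality criterion you invoke pertains to your inequality $BC(p_t,p_s)\ge BC(x_t,x_s)$, not to the separate fact $BC(p_t,p_s)=1$; two distributions on $\seqs$ with identical first-word marginals can still be mutually singular (e.g., take $x_t$ and $x_s$ supported on disjoint continuations of each first word $\word$). In that situation $r(x_t,x_s)=+\infty$, which is still consistent with the lemma, but not for the reason you state. Your closing tightness remark --- proportionality of $x_t$ and $x_s$ on each fiber, matching the Markov situation --- is correct.
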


\begin{proof}
For any $\seqbm_{\geq t}\in\seqs$, $x_t$ (and $x_s$) can be decomposed as follows:
  \begin{align}
    x_t(\seqbm_{\geq t}) &\equiv \prob(\seqbm_{\geq t} \mid \seqbm_{<t}) \\
    &= \prob(\seq_t \mid \seqbm_{<t}) \cdot \prob(\seqbm_{\geq t+1} \mid \seqbm_{<t}, \seq_{t}) \\
    &= p_t(\seq_t) \cdot \prob(\seqbm_{\geq t+1} \mid \seqbm_{<t}, \seq_{t}).
  \end{align}
For simplicity of notation, let $x_{t+1}(\seqbm_{\geq t+1} \mid \seq_t) \equiv \prob(\seqbm_{\geq t+1} \mid \seqbm_{<t}, \seq_{t})$.

Hence, the distance $d_\text{FR}(x_t, x_s)$ can be decomposed as follows:
  \begin{align}
    d_\text{FR}(x_t, x_s)
    &= 2\arccos\sum_{\bm{b}\in\seqs}
    \sqrt{ x_t(\bm{b}) \cdot x_s(\bm{b}) } \\
    &= 2\arccos\sum_{\bm{b}\in\seqs} \sqrt{
      p_t(b_1) x_{t+1}(\bm{b}_{\geq 2}\mid b_1) \cdot
      p_s(b_1) x_{s+1}(\bm{b}_{\geq 2}\mid b_1)
    } \\
    &= 2\arccos\sum_{b_1\in\vocab} \sqrt{p_t(b_1) p_s(b_1)}
    \underbrace{
    \sum_{\bm{b}_{\geq 2}\in\seqs} \sqrt{
      x_{t+1}(\bm{b}_{\geq 2}\mid b_1)
      x_{s+1}(\bm{b}_{\geq 2}\mid b_1)
    } 
    }_{ \cos \Bigl[ \frac{1}{2} d_\text{FR}\bigl(x_{t+1}(\cdot\mid b_1), x_{s+1}(\cdot\mid b_2)\bigr) \Bigr] ~~~~ \leq 1} \\
    &\geq 2\arccos\sum_{b_1\in\vocab} \sqrt{p_t(b_1) p_s(b_1)} \\
    &= d_\text{FR}(p_t,p_s),
  \end{align}
where $\bm{b}\in\seqs$ denotes any closed text, and $b_1$ and $\bm{b}_{\geq 2}=[b_2,b_3\cdots]$ represent the first word and the remainder of the text, respectively. This implies $r(x_t,x_s) \geq 1$.
\end{proof}

\subsection{Dimension Preservation for Markov Processes}
\label{sec:markov}
In this section, we analyze whether $\phi$ preserves the correlation dimension of a language system $\{x_t\}$ when the system follows certain Markov conditions. In other words, we examine whether $\nu=\hat{\nu}$ holds, where $\nu$ and $\hat{\nu}$ are the respective correlation dimensions of $\{x_t\}$ and $\{p_t\}=\{\phi(x_t)\}$.

We consider two kinds of Markov conditions, and we show that under either kind of condition, the distortion rate $r(x_t,x_s)$ defined in Formula (\ref{eq:distortion}) is bounded above. That is, there exists a constant $C$ such that
\begin{equation}
  r(x_t, x_s) < C
  \label{eq:upperbound}
\end{equation}
for any $x_t$ and $x_s$. Because $r(x_t, x_s) \geq 1$ holds in general (see Lemma \ref{thm:lowerbound} in Section \ref{sec:distortion}), the boundedness of $r(x_t, x_s)$ in Formula (\ref{eq:upperbound}) implies the bi-Lipschitz characteristic of $\phi$ and thus the equality $\nu=\hat{\nu}$. 

The first kind of Markov condition specifies the case when $\seqbm_{\geq t}$ and $\seqbm_{\geq s}$ are generated by the same Markov process with different initial states. For the second kind, $\seqbm_{\geq t}$ and $\seqbm_{\geq s}$ follow two different Markov processes, and we consider the case when their initial states get infinitesimally close to each other. The two kinds of conditions are examined in Sections \ref{sec:markov1} and \ref{sec:markov2}, respectively.

\subsubsection{When \seqbmtitle$_{\geq t}$ and  \seqbmtitle$_{\geq s}$ Follow the Same Markov Process}
\label{sec:markov1}
A Markov process can be represented by its transition matrix $A$. A text $\seqbm_{\geq t}$ (or $\seqbm_{\geq s}$) is said to follow a Markov process if $\forall \tau\geq t$ (or $\forall \tau\geq s$),
\begin{align}
  p_\tau &\equiv \prob(\seq_\tau\mid\seqbm_{<\tau})
  =\prob(\seq_\tau\mid \seq_{\tau-1}) =: A_{\seq_{\tau-1}, \seq_\tau},
\end{align}
where $A_{\seq_{\tau-1}, \seq_\tau}$ represents the transition probability from word $\seq_{\tau-1}$ to $\seq_{\tau}$, i.e., the probability that $\seq_{\tau}$ occurs immediately after $\seq_{\tau-1}$. 

\begin{theorem}
Consider two Markov processes that are defined over a vocabulary $\vocab$ and have the same transition matrix $A$ but different initial states $p_t$ and $p_s$. Then, the distance distortion rate $r(x_t, x_s)=1$.
\label{thm:markov1}
\end{theorem}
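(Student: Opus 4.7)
The plan is to reuse the recursive decomposition of $d_\text{FR}(x_t, x_s)$ established in the proof of Lemma \ref{thm:lowerbound} and to show that, under the shared-transition-matrix hypothesis, its inner factor becomes exactly $1$ rather than merely at most $1$. That decomposition read
\begin{equation*}
d_\text{FR}(x_t, x_s) = 2\arccos\sum_{b_1\in\vocab} \sqrt{p_t(b_1)\, p_s(b_1)}\, R(b_1),
\end{equation*}
with $R(b_1) = \sum_{\bm{b}_{\geq 2}\in\seqs} \sqrt{x_{t+1}(\bm{b}_{\geq 2}\mid b_1)\, x_{s+1}(\bm{b}_{\geq 2}\mid b_1)}$. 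In the general case only $R(b_1) \leq 1$ was available, yielding the inequality $d_\text{FR}(x_t,x_s) \geq d_\text{FR}(p_t,p_s)$; the goal here is to upgrade this bound to equality.

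First I would invoke the Markov property to argue that, for each fixed first word $b_1\in\vocab$, the conditional laws $x_{t+1}(\cdot\mid b_1)$ and $x_{s+1}(\cdot\mid b_1)$ on continuations $\bm{b}_{\geq 2}\in\seqs$ coincide pointwise. By the definition of a Markov process with transition matrix $A$, the conditional $\prob(\seqbm_{\geq \tau+1}\mid \seqbm_{<\tau}, \seq_\tau)$ depends only on $\seq_\tau$ and on $A$; concretely, a continuation $[b_2, b_3, \ldots]$ receives probability $\prod_{i\geq 2} A_{b_{i-1}, b_i}$ under both processes, with no residual dependence on the respective initial marginals $p_t$ or $p_s$. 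Hence the two conditional distributions are literally the same function on $\seqs$.

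Substituting this equality into the inner factor, the Bhattacharyya-type sum collapses to $R(b_1) = \sum_{\bm{b}_{\geq 2}} x_{t+1}(\bm{b}_{\geq 2}\mid b_1) = 1$, since a probability mass function sums to $1$ over its sample space. Plugging $R(b_1)\equiv 1$ back into the decomposition gives $d_\text{FR}(x_t, x_s) = 2\arccos\sum_{b_1\in\vocab}\sqrt{p_t(b_1)\,p_s(b_1)} = d_\text{FR}(p_t,p_s)$, whence $r(x_t,x_s) = 1$. The argument is essentially a one-line consequence of Lemma \ref{thm:lowerbound} combined with the Markov property, so no serious obstacle arises; the only subtlety worth stating explicitly is that the difference between the two processes is carried entirely by their first-word marginals $p_t$ and $p_s$, which is precisely what $d_\text{FR}(p_t,p_s)$ records.
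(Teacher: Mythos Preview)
Your proposal is correct and follows essentially the same approach as the paper: both use the first-word decomposition of the Bhattacharyya sum from Lemma~\ref{thm:lowerbound}, invoke the Markov property to identify $x_{t+1}(\cdot\mid b_1)$ with $x_{s+1}(\cdot\mid b_1)$ as the common product $\prod_{i\geq 2} A_{b_{i-1},b_i}$, and conclude that the inner factor equals $1$, collapsing the inequality to equality. The only difference is presentational---you explicitly frame it as upgrading the lemma's bound, while the paper rewrites the decomposition from scratch---but the underlying argument is identical.
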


\begin{proof}
  \begin{align}
    \cos \frac{d_\text{FR}(x_t, x_s)}{2} 
    &= \sum_{\bm{b}\in\seqs}
      \sqrt{x_t(\bm{b}) x_s(\bm{b})} \\
    &= \sum_{b_1\in\vocab} \sqrt{p_t(b_1) p_s(b_1)}
    \sum_{\bm{b}_{\geq 2}\in\seqs}
    \sqrt{x_{t+1}(\bm{b}_{\geq 2}\mid b_1) ~
          x_{s+1}(\bm{b}_{\geq 2}\mid b_1)},
    \label{eq:markov1-1}
  \end{align}
where $x_{t+1}(\bm{b}_{\geq 2}\mid b_1) \equiv\prob(\seqbm_{\geq t+1}=\bm{b}_{\geq 2} \mid \seq_{t}=b_1, \seq_{<t}=\seqbm_{<t})$, and $x_{s+1}(\bm{b}_{\geq 2}\mid b_1)$ is defined similarly. Owing to the Markov property of $\seqbm$, $x_{t+1}(\bm{b}_{\geq 2}\mid b_1)$ is decomposed as follows:
  \begin{equation}
    x_{t+1}(\bm{b}_{\geq 2}\mid b_1)
    = \prod_{\tau=2}^{|\bm{b}|} A_{b_\tau, b_{\tau+1}}
    = x_{s+1}(\bm{b}_{\geq 2}\mid b_1).
  \end{equation}
  
Hence, the second term of Formula (\ref{eq:markov1-1}) simplifies to
  \begin{alignat}{2}
    \sum_{\bm{b}_{\geq 2}\in\seqs}
    \sqrt{x_{t+1}(\bm{b}_{\geq 2}\mid b_1) ~
          x_{s+1}(\bm{b}_{\geq 2}\mid b_1)}
    = \sum_{\bm{b}_{\geq 2}\in\seqs} x_{t+1}(\bm{b}_{\geq 2} \mid b_1) = 1.
  \end{alignat}
  
Therefore,
  \begin{align}
    \cos \frac{d_\text{FR}(x_t, x_s)}{2} 
    = \sum_{b_1\in\vocab}
    \sqrt{p_t(b_1) p_s(b_1)} 
    = \cos \frac{d_\text{FR}(p_t, p_s)}{2},
  \end{align}
which implies $d_\text{FR}(x_t, x_s)=d_\text{FR}(p_t, p_s)$ and thus $r(x_t, x_s)=1$ for any $x_t$ and $x_s$.
\end{proof}

\subsubsection{When \seqbmtitle$_{\geq t}$ and \seqbmtitle$_{\geq s}$ Follow  Different Markov Processes}
\label{sec:markov2}
For the second kind of Markov condition, we assume the two word sequences $\seqbm_{\geq t}$ and $\seqbm_{\geq s}$ to be generated by two separate Markov processes. The transition matrices are denoted as $A$ and $B$, respectively.

The difference between $A$ and $B$ is quantified by a row-wise metric $\Delta_\word$ ($\word\in\vocab$) that is defined as the Fisher-Rao distance between the transition probabilities from state $\word$ to all states:
\begin{equation}
  \Delta_\word = 2 \arccos \sum_{j\in\vocab} \sqrt{A_{\word,j} B_{\word,j}}.
\end{equation}
The average difference across the columns $j$ is reflected as the distance between the next-word probability distributions. For simplicity, we restrict our discussion to Markov processes that satisfy the following:
\begin{equation}
  \Delta_\word \leq d_\text{FR}(p_t, p_s) ~~~~ \word\in\vocab.
  \label{eq:markov2-norm}
\end{equation}
This is analogous to bounding the matrix norm $\lVert A-B\rVert$ by $d_\text{FR}(p_t, p_s)$.

Here, the calculation of $d_\text{FR}(x_t,x_s)$ is more difficult than in the case above, and we must consider texts of different lengths. In other words, the ``end'' of a text must be clearly defined. $\seqs$ was abstractly defined as ``the set of all texts,'' but this definition must be made rigorous here to incorporate the text lengths. 

Hence, we restrict $\seqs$ to the set of all {\em closed} texts, i.e.,
those end with a special closing token denoted as \eos. Note that the
probabilities of these closed texts also determine the probability of
any unclosed text (i.e., one without \eos) through aggregation of all
closed texts that have the unclosed text as their prefix. Each word in
the vocabulary besides \eos can be seen as an unclosed text of length
one. 

For the two Markov processes, \eos can be understood as an ``absorbing state,'' because the transition probability from \eos to any other state is zero: a closed text will not return to being unclosed. In other words, $A_{\eos,\eos}=1$, and $A_{\eos,\word}=0$ ($\forall\word\in\vocab\backslash\eos$). We consider the simplest case in which the transition probabilities to \eos are equal, with a value denoted as $\rho$:
\begin{equation}
  A_{\word,\eos} = B_{\word,\eos} = \rho ~~~~~~\forall \word\in\vocab,
  \label{eq:markov2-eos}
\end{equation}
where $0<\rho<1$ is a constant.

\begin{theorem}
For a pair of Markov processes $A$ and $B$ defined over $\vocab$ with an absorbing state \eos, if the two conditions in Formulas (\ref{eq:markov2-norm}) and (\ref{eq:markov2-eos}) are met, then
  \begin{equation}
    \lim_{d_\text{FR}(p_t, p_s)\to 0} r(x_t, x_s)
    \leq \rho^{-1/2}.
  \end{equation}
  \label{thm:markov2}
\end{theorem}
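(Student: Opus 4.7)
The plan is to reuse the first-word Bhattacharyya decomposition from Lemma~\ref{thm:lowerbound} but now retain the second-order terms that vanished when $A=B$ in Theorem~\ref{thm:markov1}. First I would write
$\cos\bigl(d_\text{FR}(x_t,x_s)/2\bigr) = \sum_{b_1\in\vocab}\sqrt{p_t(b_1)p_s(b_1)}\,F(b_1)$,
where $F(b_1)$ is the Bhattacharyya coefficient between the conditional continuations of $x_t$ and $x_s$ under processes $A$ and $B$ respectively, given first word $b_1$. Since $\eos$ is absorbing, $F(\eos)=1$; and for $b_1\ne\eos$, the Markov property together with assumption~(\ref{eq:markov2-eos}) yields the recursion $F(b_1) = \rho + \sum_{b_2\in\vocab\setminus\{\eos\}}\sqrt{A_{b_1,b_2}B_{b_1,b_2}}\,F(b_2)$.

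Next I would recast this recursion perturbatively. Letting $V' = \vocab\setminus\{\eos\}$, $M_{b_1,b_2} = \sqrt{A_{b_1,b_2}B_{b_1,b_2}}$ for $b_1,b_2\in V'$, and $u(b_1) = 1-\cos(\Delta_{b_1}/2)$, the substitution $F = 1 - f$ converts the recursion into $f = u + Mf$. The key structural observation is that the row sums of $M$ are exactly $\cos(\Delta_{b_1}/2)-\rho\le 1-\rho$, so $M$ has infinity-norm at most $1-\rho$, the Neumann series $(I-M)^{-1} = \sum_{k\ge 0}M^k$ converges, and $f(b_1)\le u^\ast/\rho$ componentwise, where $u^\ast = \max_{b_1}u(b_1)\le 1-\cos(\epsilon/2)$ by assumption~(\ref{eq:markov2-norm}), writing $\epsilon\equiv d_\text{FR}(p_t,p_s)$.

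Finally I would substitute back. Because $p_t(\eos)=p_s(\eos)=\rho$, one has $\sum_{b_1\in V'}\sqrt{p_t(b_1)p_s(b_1)} = \cos(\epsilon/2)-\rho$, and the decomposition collapses to $\cos\bigl(d_\text{FR}(x_t,x_s)/2\bigr) = \cos(\epsilon/2) - \delta$ with $0\le\delta\le(1-\rho)(1-\cos(\epsilon/2))/\rho$. Applying $1-\cos(\theta/2)=\theta^2/8 + O(\theta^4)$ on both sides then yields $d_\text{FR}(x_t,x_s)^2 \le \epsilon^2/\rho + O(\epsilon^4)$, and therefore $\lim_{\epsilon\to 0}r(x_t,x_s) \le \rho^{-1/2}$.

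The main obstacle is bookkeeping: the core of the argument is an inequality on the Bhattacharyya coefficient (the cosine of a half-distance), but the theorem is stated as a ratio of angles, so the Taylor expansions on both sides must be carried out to consistent order so that the $O(\epsilon^4)$ remainders genuinely vanish in the limit. A secondary concern is the invertibility of $I-M$ when the vocabulary is very large or countable, which follows from $\|M\|_\infty\le 1-\rho<1$ but warrants an explicit remark.
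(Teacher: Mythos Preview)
Your argument is correct and lands on the same limiting inequality as the paper, but the bookkeeping is organised differently. The paper slices the Bhattacharyya sum by \emph{text length}: setting $H_n$ to be the contribution of closed texts of length $n$, it shows the scalar recursion $H_{n+1}\ge(\cos(\epsilon/2)-\rho)\,H_n$ and sums the geometric series to obtain the closed form $\cos\bigl(d_\text{FR}(x_t,x_s)/2\bigr)\ge\rho/(1+\rho-\cos(\epsilon/2))$, from which $\lim_{\epsilon\to0}F(\epsilon)/\epsilon=\rho^{-1/2}$ follows by the same small-angle expansion you use. You instead slice by \emph{state}, conditioning on the first word to get the fixed-point system $F=\rho\mathbf{1}+MF$ and inverting via the Neumann series with $\|M\|_\infty\le1-\rho$. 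Both routes pivot on the identical identity that the row sums of $M$ equal $\cos(\Delta_{b_1}/2)-\rho$, and both tacitly use $p_t(\eos)=p_s(\eos)=\rho$ (the paper encodes this as $H_1=\rho$). The paper's scalar recursion is marginally shorter; your operator formulation is more structural and would extend more cleanly to state-dependent bounds or non-constant absorption probabilities.
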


\begin{proof}
Recall that the Fisher-Rao distances between $x_t$ and $x_s$ and between $p_t$ and $p_s$ are defined as follows:
  \begin{align}
    d_\text{FR}(x_t, x_s) &= 2\arccos \sum_{\seq\in\seqs}
    \sqrt{x_t(\seqbm) x_s(\seqbm)}, \\
    d_\text{FR}(p_t, p_s) &= 2\arccos \sum_{\word\in\vocab}
    \sqrt{p_t(\word) p_s(\word)}.
  \end{align}

Rearranging the sequences within $\seqs$ by considering different sequence lengths $n$ from 1 to infinity, we have the following:
  \begin{align}
    d_\text{FR}(x_t, x_s)
    = 2\arccos \sum_{\substack{\text{closed}\,\seqbm}}
    \sqrt{x_t(\seqbm) x_s(\seqbm)}
    = 2\arccos \sum_{n=1}^\infty H_n,
    \label{eq:cos-delta}
  \end{align}
where
  \begin{equation}
    H_n = 
    \sum_{\substack{\text{unclosed}\,\seqbm \\ |\seqbm|=n-1}}
    \sqrt{x_t([\seqbm, \eos]) \cdot x_s([\seqbm, \eos])}.
  \end{equation}
Here, $|\seqbm|$ notes the length of the sequence $\seqbm$;
$[\seqbm,\eos]$ represents a closed text formed by concatenating the
unclosed text $\seqbm$ with $\eos$. In particular, $H_1 = \rho$.

By considering the last word $\partial\seq$ of any sequence $\seqbm$, we can see that
  \begin{align}
    H_{n+1} &=
    \sum_{\substack{\text{unclosed}\,\seqbm \\ |\seqbm|=n-1}}
      \sqrt{
        \frac{x_t([\seqbm, \eos])}{A_{\partial\seq, \eos}}\cdot
        \sum_{\word\in\vocab\backslash\eos} A_{\partial\seq, \word} A_{\word,\eos}
      } \\
      &\qquad\qquad \cdot
      \sqrt{
        \frac{x_s([\seqbm, \eos])}{B_{\partial\seq, \eos}}\cdot
        \sum_{\word\in\vocab\backslash\eos} B_{\partial\seq, \word} B_{\word,\eos}
      }
      \\
    &= 
    \sum_{\substack{\text{unclosed}\,\seq \\ |\seq|=n-1 \\ \seq_1=i}}
    \left\{
      \sqrt{
        x_t([\seqbm, \eos])
        \cdot
        x_s([\seqbm, \eos])
      }
      \cdot
      \sum_{\word\in\vocab\backslash\eos} \sqrt{A_{\partial\seq,\word} B_{\partial\seq,\word}}
    \right\} \\
    &= 
    \sum_{\substack{\text{unclosed}\,\seq \\ |\seq|=n-1 \\ \seq_1=i}}
    \left\{
      \sqrt{
        x_t([\seqbm, \eos])
        \cdot
        x_t([\seqbm, \eos])
      }
      \cdot
      \left(\cos \frac{\Delta_{\partial\seq}}{2} - \rho\right)
    \right\} \\
    &\geq
    \sum_{\substack{\text{unclosed}\,\seq \\ |\seq|=n-1 \\ \seq_1=i}}
    \left\{
      \sqrt{
        x_t([\seqbm, \eos])
        \cdot
        x_t([\seqbm, \eos])
      }
      \cdot
      \left(\cos \frac{d_\text{FR}(p_t, p_s)}{2} - \rho\right)
    \right\} \\
    &=
      \left(\cos \frac{d_\text{FR}(p_t, p_s)}{2} - \rho\right)
      H_n,
      \label{eq:sn+1}
  \end{align}
where the inequality is due to the condition in Formula (\ref{eq:markov2-norm}).

Next, by combining Formulas (\ref{eq:sn+1}) and (\ref{eq:cos-delta}) and applying the formula for the sum of a geometric series, we have
  \begin{equation}
    d_\text{FR}(x_t,x_s) \leq 2 \arccos \frac{\rho}{1+\rho - \cos\frac{d_\text{FR}(p_t,p_s)}{2}} =: F(d_\text{FR}(p_t,p_s)).
  \end{equation}
When $d_\text{FR}(p_t,p_s) \to 0$, it follows that $F(d_\text{FR}(p_t,p_s)) \to 0$ as well. Furthermore,
  \begin{align}
    \lim_{d_\text{FR}(p_t,p_s) \to 0} r(x_t,x_s)
    \leq\lim_{d_\text{FR}(p_t,p_s) \to 0} \frac{F(d_\text{FR}(p_t,p_s))}{d_\text{FR}(p_t,p_s)}
    = \rho^{-1/2},
  \end{align}
which implies Theorem \ref{thm:markov2}.
\end{proof}

Furthermore, as all i.i.d. processes meet the two conditions in Formulas (\ref{eq:markov2-norm}) and (\ref{eq:markov2-eos}), we immediately obtain the following corollary for i.i.d. processes.
\begin{corollary}
Consider two i.i.d. processes that are defined over a vocabulary $\vocab$ and represented by probability vectors $\bm{u}$ and $\bm{v}$. The first entries of $\bm{u}$ and $\bm{v}$, denoted respectively as $u_1$ and $v_1$, specify the occurrence probability of \eos at any timestep. Then, it follows that
  \begin{equation}
    \lim_{d_\text{FR}(p_t, p_s)\to 0} r(x_t, x_s)
    \leq u_1^{-1/2}.
  \end{equation}
\end{corollary}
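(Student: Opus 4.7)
The plan is to realize each i.i.d.\ process as a Markov process whose transition matrix has constant rows and then invoke Theorem \ref{thm:markov2}. Explicitly, for the i.i.d.\ process specified by $\bm{u}$, take the transition matrix $A$ with every row equal to $\bm{u}$; similarly take $B$ with every row equal to $\bm{v}$. With these choices $p_t \equiv \bm{u}$ and $p_s \equiv \bm{v}$ for every $t$ and $s$, and \eos is automatically an absorbing state because the probabilities $u_1$ and $v_1$ do not depend on the past.

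First I would verify the row-wise condition (\ref{eq:markov2-norm}) on the Fisher-Rao discrepancy. Since each row of $A$ is $\bm{u}$ and each row of $B$ is $\bm{v}$, one obtains, uniformly in $\word$,
$$\Delta_\word = 2\arccos \sum_{j\in\vocab} \sqrt{u_j v_j} = d_\text{FR}(p_t, p_s),$$
so the hypothesis is met with equality (there is no slack to worry about).

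Next I would address the absorbing-state condition (\ref{eq:markov2-eos}), which literally demands a common value $\rho = A_{\word,\eos} = B_{\word,\eos}$. Here $A_{\word,\eos} = u_1$ and $B_{\word,\eos} = v_1$ are independent of $\word$ but need not coincide. Because the corollary only asserts a bound in the limit $d_\text{FR}(p_t, p_s) \to 0$, and since Fisher-Rao convergence on the simplex forces $\bm{v} \to \bm{u}$ and in particular $v_1 \to u_1$, it is natural to set $\rho := u_1$. Theorem \ref{thm:markov2} then delivers
$$\lim_{d_\text{FR}(p_t, p_s) \to 0} r(x_t, x_s) \leq \rho^{-1/2} = u_1^{-1/2},$$
which is the desired inequality.

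The main obstacle is precisely this mild mismatch in condition (\ref{eq:markov2-eos}): the theorem is stated with a single $\rho$, while the i.i.d.\ pair realizes this equality only in the limit. The cleanest way to close the gap is to apply Theorem \ref{thm:markov2} along a sequence $(\bm{u}, \bm{v}^{(n)}) \to (\bm{u}, \bm{u})$ to auxiliary pairs $(\tilde A_n, \tilde B_n)$ obtained by slightly renormalizing $\bm{v}^{(n)}$ so that $\tilde B_n$ shares a common absorbing probability $\rho_n$ with $A$, and to check that the resulting perturbation of $r(x_t, x_s)$ contributes a factor tending to $1$; the bound $\rho_n^{-1/2}$ then converges to $u_1^{-1/2}$. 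Because the proof of Theorem \ref{thm:markov2} is continuous in its inputs, this perturbation argument is essentially routine, and the corollary follows.
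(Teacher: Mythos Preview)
Your approach is essentially the paper's: realize the i.i.d.\ processes as constant-row Markov chains and invoke Theorem~\ref{thm:markov2}. The paper's own argument is a single sentence asserting that i.i.d.\ processes satisfy both hypotheses; your observation that condition~(\ref{eq:markov2-eos}) literally demands a common $\rho$ while $u_1$ and $v_1$ may differ, together with your limiting/perturbation fix, is a rigor refinement that the paper glosses over.
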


\vfill

\section{GPT-Like Large-Scale Language Models}
\label{sec:gpt}
\begin{figure}[H]
\centering \includegraphics[width=0.6\linewidth]{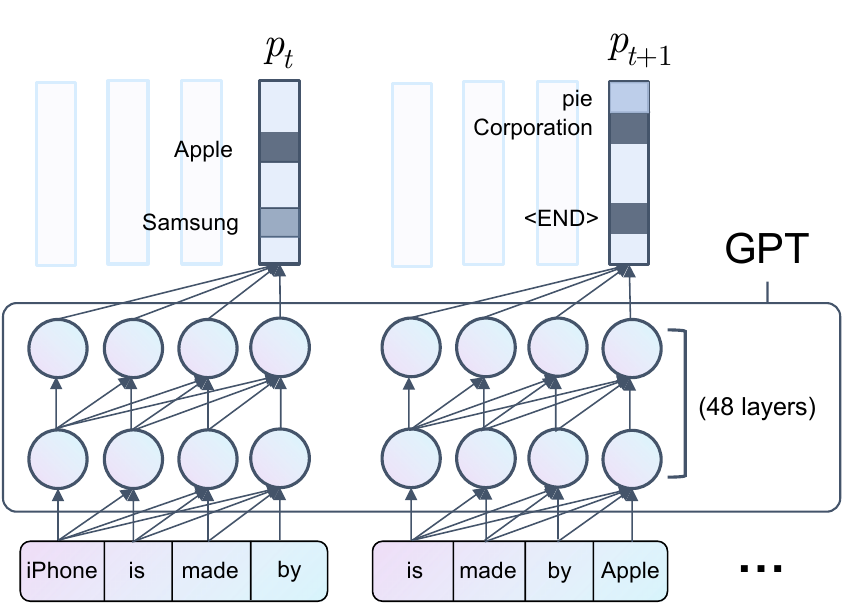}
\caption{Predicting the probability distribution $p_t$ over a vocabulary with the GPT-\texttt{xl} model, which has 48 layers.}
\label{fig:gpt}
\end{figure}

Generative pretrained transformers (GPTs) \citep{radford2019language, brown2020language} comprise a class of statistical language models that are implemented with neural networks. Large-scale GPT models, including the representative {\em ChatGPT} by OpenAI, have shown quasi-human-level performance in language understanding and question answering.

Figure \ref{fig:gpt} illustrates how $p_t$ was estimated in this work by using a GPT model, such as one with 48 neural-network layers (i.e., GPT2-\texttt{xl}). An input sequence of words is converted to vectors and then processed by multiple neural-network layers. To guarantee GPT's ``autoregressive'' nature, at every layer, the words visible during processing at timestep $t$ are limited to those previous to $t$.

A GPT model can process a variable-length sequence of words, subject
to a maximum length $c$ as given in \ref{eq:ctxlen} in the main
text. Figure \ref{fig:gpt} shows the case of $c=4$, in which a context
of four words is used for predicting $p_t$ at any timestep $t$. For
the work described in the main text, we used $c=512$ unless specified
otherwise.

LLMs are adept at estimating $p_t$, as they are trained with the objective of minimizing the statistical discrepancy between $p_t$ for the actual data and the model's estimate. More advanced and larger language models enhance the precision of $p_t$ estimation, bringing the measured correlation dimension closer to its true, universal value.

We used pretrained GPT-like models that are publicly available at \url{https://huggingface.co/models}. For the model size referred to as \texttt{xl} in the main text, we used \texttt{gpt2-xl} for English, \texttt{nlp-waseda/gpt2-xl-japanese} for Japanese, and \texttt{malteos/gpt2-xl-wechsel-german} for German. The tags \texttt{xl}, \texttt{large}, \texttt{medium}, and \texttt{small} correspond to GPT2 with 1.5 billion, 762 million, 345 million, and 117 million parameters, respectively.

For Chinese, we did not find a model that was consistent with the usual \texttt{xl} specification. However, we found a larger model for Chinese at \url{https://huggingface.co/IDEA-CCNL/Wenzhong2.0-GPT2-3.5B-chinese}, with around twice as many parameters as the usual \texttt{xl} specification. We also refer to this model as \texttt{xl}, as used in \ref{fig:corrdim}.

For even larger models, we considered the ``Yi'' family of GPT models provided at \url{https://huggingface.co/01-ai/Yi-34B}. Specifically, we used two models with 6 billion and 34 billion parameters. Among all publicly available models, the 34B Yi model is the state of the art for English, performing even better than several models with more parameters (e.g., Llama-2 70B). 

Models that are larger than \texttt{xl} often adopt half-precision floating-point numbers for their parameters. Accordingly, the numerical precision in evaluating $p_t$ is lower than with \texttt{xl} models. As a result, the dimension values may be more scattered, as was seen in \ref{fig:corrdim}. 

\vfill

\section{Dimension Reduction \label{sec:dimreduce}}

\begin{figure}[H]
\centering \includegraphics[width=0.4\linewidth]{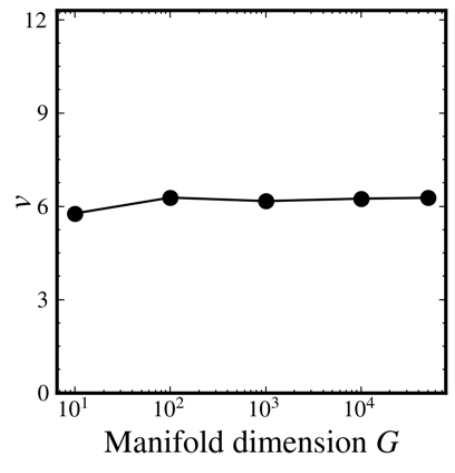}
\caption{Correlation dimension of {\em Don Quixote} as estimated using GPT2-\texttt{xl}, with respect to the statistical manifold's dimension $G$.}
\label{fig:donquixote-dim}
\end{figure}

We proposed a method to efficiently estimate the correlation dimension
$\hat{\nu}$. With this method, we have acquired dimension values that
are indistinguishable from those calculated with the direct, naive
method, which is 50X slower in our setting.

As detailed in \ref{eq:groupprob}, we mapped each distribution
$p_t$ into a new distribution $q_t$ with reduced dimensionality.
Consequently, with $q_t$, the correlation dimension can be estimated
at a lower computational cost using the following metric:
\begin{equation}
 \tilde{d}_\text{FR}(p_t, p_s)
 \equiv 2\arccos \left(\sum_{m=1}^M \sqrt{q_t(m) q_s(m)}\right).
 \label{eq:mod}
\end{equation}

Empirically, varying the manifold dimension $M$ does not alter the
results significantly. Figure \ref{fig:donquixote-dim} shows how the
estimated correlation dimension $\hat{\nu}$ evolves in relation to
$M$. The rightmost point represents the case of $M=|\vocab|$, i.e., no
reduction in the manifold's (topological) dimension. As $M$ decreases,
$\hat{\nu}$ barely changes until $M$ reaches its smallest value of
100. This suggests the reliability of the projection $p_t\mapsto q_t$
in preserving $\hat{\nu}$. In this letter, we employed this dimension
reduction method in the extensive large-scale experiments, as shown in
\ref{fig:corrdim}, where $M$ was set to 1000.

In Section \ref{sec:randomprocs}, this dimension reduction method was
also applied to acquire a low-dimensional visualization of a language
system.

\vfill
\newpage
\section{Local Fractality}
\label{sec:local}

\subsection{Comparison with Dirichlet Distribution}
\label{sec:local-dirichlet}

\begin{figure}[htbp]
\includegraphics[width=0.8\linewidth]{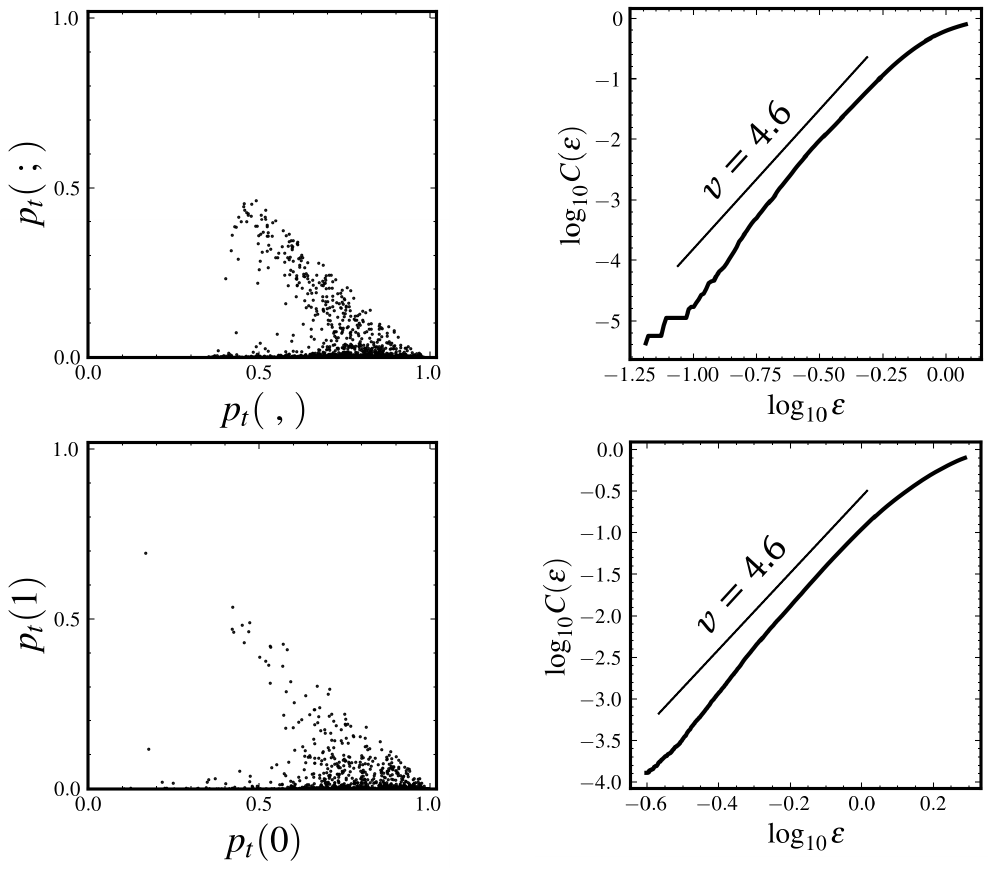}
\caption{Comparison between {\em Don Quixote} (top row) and a sequence of i.i.d. samples drawn from a Dirichlet distribution (bottom row). Left column: probabilities in two dimensions. Right column: correlation integral curves estimated using the points from the left column. It can be seen that the local fractal is reproduced well by the Dirichlet distribution.\label{fig:firstkind} }
\end{figure}

As mentioned in the main text, local fractals are observed in low-entropy regions. These fractals show a relatively simpler self-similar pattern that also appears in a Dirichlet distribution.

A Dirichlet distribution is a continuous probability distribution defined on a $K$-dimensional probability simplex:
\begin{equation}
  [z_1,\cdots,z_K]\subset[0,1]^K, ~~~~~\sum_{k=1}^K z_k=1,
\end{equation}
as specified by a parameter vector denoted as $\bm{\alpha}=[\alpha_1,\cdots,\alpha_K]\in\mathbb{R}^K$. The probability density function of a Dirichlet distribution over $[z_1,\cdots,z_K]$ is defined as follows:
\begin{equation}
  p(z_1,\cdots,z_K) = \frac{1}{B(\alpha)} \prod_{k=1}^K z_k^{\alpha_k - 1},
\end{equation}
where $B(\alpha)$ is the partition function.

We consider a Dirichlet distribution with $K=50257$, which is the same as the vocabulary size used by the English GPT2 models. We set the distribution's parameter vector, $\bm{\alpha}\in\mathbb{R}^{50257}$, such that $\alpha_1=3$, $\alpha_2=0.2$, and $\alpha_k=2.2\times 10^{-5}$ for $k=3,4,\cdots,50257$. 

Figure \ref{fig:firstkind} shows a comparison between {\em Don
Quixote} (upper) and i.i.d. samples drawn from the Dirichlet
distribution (lower), with restriction to the {\em low}-entropy region.
The local fractal shown in Figure \ref{fig:firstkind} (upper left) is
specific to the word ``,'': a point $p_t$ was selected to appear if
$H(p_t)<3$ and ``,'' has the largest probability in $p_t$ across the
vocabulary. The left-hand plots show the probability for each pair
of words, while the right-hand plots show the correlation integral
curves estimated from the points on the left. As seen in the figure,
the simple Dirichlet distribution approximates the real text well.
Therefore, the local fractal, which appears when the subsequent word
is almost determined, is reproducible with the Dirichlet distribution.

Note, however, that a Dirichlet distribution has completely different behavior with respect to global self-similarity. When the samples from a Dirichlet distribution are restricted to the {\em high}-entropy region, they do not exhibit the self-similar pattern of language. We demonstrate this in Section \ref{sec:sym-dirichlet}.

\newpage
\subsection{Local Fractals under Small Context Length}
\label{sec:local-ctxlen}

\begin{figure}[H]
\centering \includegraphics[width=\linewidth]{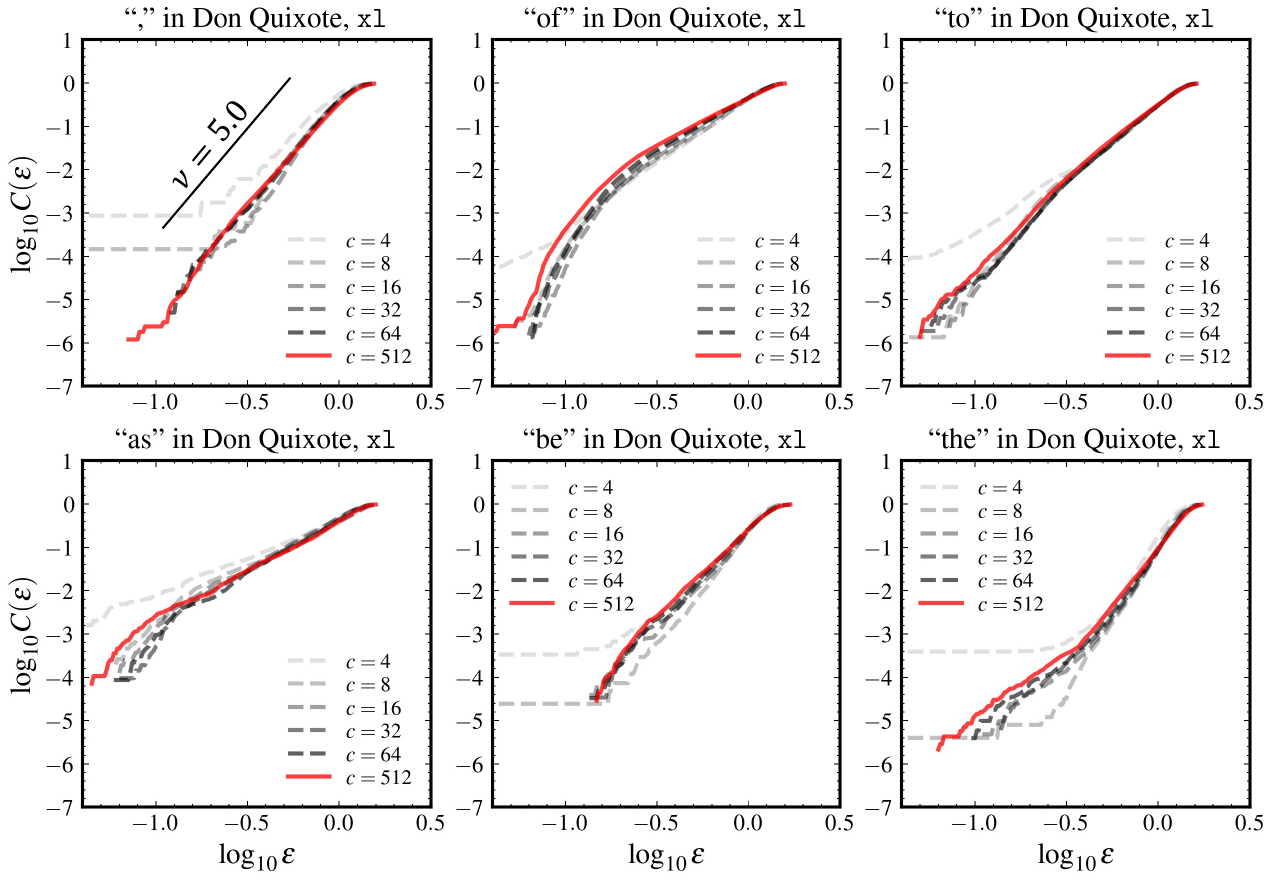}
\caption{Correlation integral curves for the local regions for
six frequent words with respect to different context length $c$,
estimated with GPT2-\texttt{xl} and a 100,000-word text segment in
Don Quixote. The local region for a word was identified by selecting
timesteps $t$ among $\{p_t\}$ at which the word has the largest
probability mass across the vocabulary and this probability mass
exceeds $\eta=0.5$. We were not able to examine $c<4$ because the
number of timesteps when $p_t$ falls in a local region decreased to
zero.}
\label{fig:donquixote-local}
\end{figure}

In our statistical manifold, each word corresponds to its own
low-entropy region where a local fractal may appear. Hence, local
fractals at different words may show varied dimensions. Here, we
examined several frequent words ``,'', ``of'', ``to'', ``as'', ``be'',
``the''. Furthermore, we examined the effect of $c$ on their local
patterns.

Figure \ref{fig:donquixote-local} displays the correlation integral
curves for specific local regions, with each subfigure corresponding
to a distinct word and each plot reflecting various $c$ values,
ranging from 512 to 4.

The local region associated with a word was identified by selecting
timesteps $t$ from the set $\{p_t\}$ where the word exhibits the
highest probability mass within the vocabulary, and this mass
surpasses a threshold of $\eta=0.5$. We refrained from examining $c$
values lower than 4 since, at $c<4$, the frequency of timesteps where
$p_t$ aligns with the local area was too scant (fewer than 50
instances), making it challenging to derive accurate estimates.

Figure \ref{fig:donquixote-local} reveals the {\em absence} of a
consistent shift in the correlation integral curve observed as $c$
decreases. The curves for smaller $c$ values show fluctuations around
the curve for $c=512$. While occasional upward shifts at $c=4$ can be
noted, these do not maintain consistency across different words.

This observation contrasts with Figure \ref{fig:donquixote}(c) of the
main text which displays a gradual shift of the global correlation
dimension with respect to $c$ is evident.

This contrast underlines a fundamental disparity between global and
local fractal phenomena as discussed in the text. Unlike the global
correlation dimension that characterizes the long-memory property of
language, the local dimensions characterize certain word-specific
characteristics that are invariant under a reduction of context
diversity (i.e., decreasing $c$).

We present a possible explanation for this invariance of the local
dimensions under varying $c$. Consider a specific word. A context is
a word sequence, and the word's occurring frequency is measured at
each location of the sequence, thus producing a mean $\mu$ and
variance $\sigma^2$ of the frequency distribution for this context.
Thus, a variation is expected across different contexts. Assuming
this variation abides with a scaling law: $\sigma^2\sim \mu^\gamma$,
where $\gamma$ is the scaling exponent and is somehow related to the
local correlation dimension. Reducing the context length $c$ is
equivalent to averaging multiple contexts and forces the LLM to
estimate the averaged occurring probability of the word.

The following theorem shows that $\gamma$ is
preserved under random pairing and merging of contexts.

\begin{theorem}(Invariance of $\gamma$ under context merging)
\label{thm:gamma-invariance}
Consider a set of independent contexts $\{\seqbm^1, \seqbm^2, ...,
\seqbm^{2L}, ...\}$ and a word $\word$.
Define $p(\word|\seqbm)$ as the occuring frequency of the word
$\word$ in a context $\seqbm$. If the variance and mean of the word's
frequency $p(\word|\seqbm)$ across these contexts follow
the scaling relationship $\text{Var}_l[p(\word|\seqbm^l)]
\propto \mathbb{E}_l[p(\word|\seqbm^l)]^\gamma$.
Then, upon merging these contexts pairwise into a new set
$\{\bar{\seqbm}\}$, where $\bar{\seqbm}^{l} = \{\seqbm_{2l-1};
\seqbm_{2l}\}$ and the frequency $p(\word|\bar{\seqbm}^l) =
(p(\word|\seqbm^{2l-1}) + p(\word|\seqbm^{2l}))/2$, the resulting
word frequency $p(\word|\bar{\seqbm}^l)$ still follows the same
scaling relationship with the exponent $\gamma$ remaining unchanged.
\end{theorem}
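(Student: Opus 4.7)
The plan is a direct computation: I will express the mean and variance of the merged-context frequency $p(\word\mid\bar{\seqbm}^l)$ in terms of those of the original $p(\word\mid\seqbm^l)$, and then substitute into the assumed power law to read off the new exponent. The whole argument reduces to linearity of expectation plus the pairwise independence supplied by the theorem.

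First, I would show that the mean is invariant under merging. Writing $p(\word\mid\bar{\seqbm}^l) = \tfrac{1}{2}\bigl[p(\word\mid\seqbm^{2l-1}) + p(\word\mid\seqbm^{2l})\bigr]$ and using linearity of expectation together with the fact that the $\seqbm^l$ are identically distributed, one gets $\mathbb{E}_l[p(\word\mid\bar{\seqbm}^l)] = \mathbb{E}_l[p(\word\mid\seqbm^l)] =: \mu$. Next, because $\seqbm^{2l-1}$ and $\seqbm^{2l}$ are independent by hypothesis, the variance splits additively across the pair, yielding $\text{Var}_l[p(\word\mid\bar{\seqbm}^l)] = \tfrac{1}{4}\bigl[\text{Var}[p(\word\mid\seqbm^{2l-1})] + \text{Var}[p(\word\mid\seqbm^{2l})]\bigr] = \tfrac{1}{2}\sigma^2$, where $\sigma^2 := \text{Var}_l[p(\word\mid\seqbm^l)]$. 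Substituting into the assumed scaling $\sigma^2 = c\mu^\gamma$, the merged statistics satisfy $\bar{\sigma}^2 = \tfrac{c}{2}\mu^\gamma = \tfrac{c}{2}\bar{\mu}^\gamma$, which is again a power law with the \emph{same} exponent $\gamma$ (only the prefactor is scaled by $1/2$). This already establishes the theorem.

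The main obstacle is interpretive rather than algebraic. A single word paired with a single pool of contexts yields only one $(\mu,\sigma^2)$ point, which on its own cannot pin down a power-law slope; the scaling relation must therefore be read across a family, for instance across different words $\word$ or across different subsamples of the context pool. I must check that the halving of the variance and the preservation of the mean apply \textbf{uniformly} across this family, so that every point translates by the same vertical amount in log--log coordinates and the slope $\gamma$ is not tilted. Once this reading is fixed, the three-line calculation above applies word-by-word and the conclusion follows. The only technical hypothesis that really gets used is the pairwise independence $\seqbm^{2l-1}\perp\seqbm^{2l}$, which is exactly the ``independent contexts'' assumption in the statement; no further stationarity or Markov property is required.
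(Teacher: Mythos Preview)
Your proposal is correct and follows essentially the same approach as the paper: show the mean is preserved, show the variance is halved via independence, and read off that the exponent $\gamma$ is unchanged while only the prefactor changes. The only presentational difference is that the paper works with finite-sample averages over $l=1,\dots,L$ and lets the cross-term $\tfrac{1}{2L}\sum_l (p(\word\mid\seqbm^{2l-1})-\mu)(p(\word\mid\seqbm^{2l})-\mu)$ vanish as $L\to\infty$, whereas you invoke population moments and use $\operatorname{Var}(X+Y)=\operatorname{Var}(X)+\operatorname{Var}(Y)$ for independent $X,Y$ directly; your version is slightly cleaner and your interpretive remark about needing a family of $(\mu,\sigma^2)$ points to define a slope is a point the paper leaves implicit.
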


\begin{proof}
Define
$\mu :=\frac{1}{2L}\sum_{l=1}^{2L} p(\word|\seqbm^l)$ and
$\bar{\mu} := \frac{1}{L}\sum_{l=1}^{L} p(\word|\bar{\seqbm}^l)$.
Also, let
$\sigma^2 :=\frac{1}{2L}\sum_{l=1}^{2L} (p(\word|\seqbm^l) - \mu)^2$
and
$\bar{\sigma}^2:=\frac{1}{L}\sum_{l=1}^{L}(p(\word|\bar{\seqbm}^l)-\bar{\mu})^2$.
It is straightforward to show that $\bar{\mu}=\mu$. Furthermore,
\begin{align*}
\bar{\sigma}^2 &= \frac{1}{4L} \sum_{l=1}^L \left[ (p(\word|\seqbm^{2l-1}) - \mu) + (p(\word|\seqbm^{2l}) - \mu) \right]^2 \\
&= \frac{1}{2} \sigma^2 + \frac{1}{2L} \sum_{l=1}^L (p(\word|\seqbm^{2l-1})-\mu)(p(\word|\seqbm^{2l})-\mu),
\end{align*}
where the cross-term vanishes as $L\to\infty$ due to independence. Thus, $\bar{\sigma}^2 \to \sigma^2 / 2$.

Given $\sigma^2=\beta \mu^\gamma$, it follows that $\bar{\sigma}^2 =
\frac{\beta}{2} \bar{\mu}^\gamma$ at large $L$, indicating $\gamma$
remains unchanged under context merging.
\end{proof}

Topic models \citep{blei2003latent} have been successful in modeling
this variation of word frequency distribution across contexts by
considering the notion of topics. Interestingly,
\citet{doxas2010dimensionality} also showed that their observed
scaling structure in language sequences can be reproduced by topic
models.

\vfill
\section{Our Data}
\label{sec:data}

\begin{table}[htbp]
\centering
\caption{Summary of the datasets used in this letter.}
  \label{tbl:dataset}
  \begin{tabular*}{\linewidth}{@{\extracolsep{\fill}} lccr}
    \toprule
      Dataset & language & \# sequences & sequence length \\
    \midrule
    \multicolumn{4}{c}{\textbf{Books}} \\
      Gutenberg & English & 80 & 150,000 \\
                & Chinese & 32 & 150,000 \\
                & German & 16 & 150,000 \\
      Aozora-bunko & Japanese & 16 & 150,000 \\
      \midrule
      Stanford Encyclopedia of Philosophy & English & 60 & 25,000 \\
      Wikipedia webpages & English & 40 & 30,000 \\
      Academic papers & English & 242 & 15,000 \\
    \midrule
    \multicolumn{4}{c}{\textbf{Music}} \\
      \texttt{gtzan} & - & 1000 & 1,500 \\
    \bottomrule
  \end{tabular*}
\end{table}

Table \ref{tbl:dataset} summarizes the datasets used in this letter.

For books, we used texts from Project Gutenberg, except for the Japanese texts, which were taken from Aozora Bunko \footnote{\url{https://www.aozora.gr.jp/}}. A minimum file size of 1 megabyte was used as a threshold to obtain the longest texts in each collection. For each book, we skipped the first 50,000 words, because this first section includes catalog information and any prologue to the main text. The next 150,000 words of every text were thus used to estimate the correlation dimension.

The texts were split into words by using the tokenizers released with the GPT models. The tokenizers split a text into ``sub-word'' units, which is a common technique to deal with rare words within a fixed-size vocabulary. 

English texts from three other sources were also tested with our method: the Stanford Encyclopedia of Philosophy (SEP, \url{https://plato.stanford.edu/}), Wikipedia webpages, and academic papers (\url{https://huggingface.co/datasets/orieg/elsevier-oa-cc-by}). Texts beyond a certain length threshold were chosen: 25,000 words for SEP, 30,000 words for Wikipedia, and 15,000 words for the academic papers.

We also tested our method on the \texttt{gtzan} music dataset
\citep{tzanetakis2002musical} of WAV files. This dataset contains 1000
pieces of music (30 s each) categorized into 10 genres: rock, country,
metal, blues, disco, pop, hip-hop, jazz, reggae, and classical. Each
genre has 100 music pieces.

The \texttt{encodec} library was used to compress each music piece into a sequence of discrete codes constituting a vocabulary of size 2048; i.e., $|\vocab|=2048$. Each 30-s piece was encoded as a sequence of 1500 timesteps.

In general, the \texttt{encodec} library compresses a music piece into four tracks (sequences) based on four different codebooks (vocabularies), of which one is the main track. We considered only the main track because it contains the most information in a music piece.

\vfill
\newpage
\section{Supplementary Results on Book Texts}

\begin{figure}[H]
\begin{minipage}{0.50\linewidth}
\centering
\includegraphics[width=\linewidth]{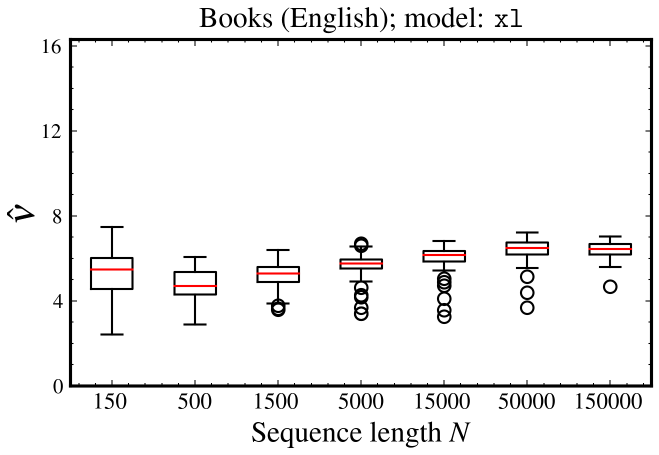}
\caption{Distribution of the correlation dimensions for the 80 English books in the Gutenberg Project, as measured using text fragments of different lengths.}
\label{fig:gutenberg-N}
\end{minipage}
\hspace{0.02\linewidth}
\begin{minipage}{0.41\linewidth}
\centering
\includegraphics[width=\linewidth]{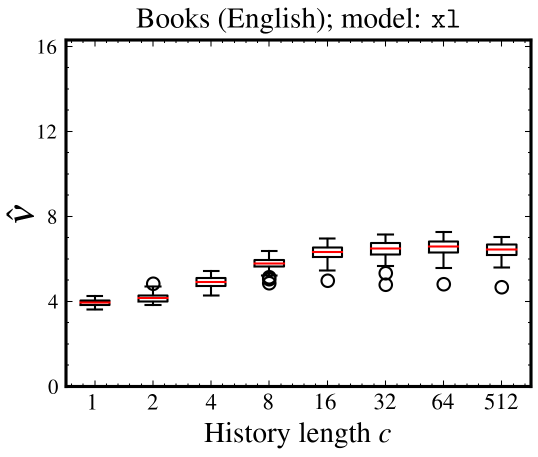}
\caption{Distribution of the correlation dimensions for the 80 English books in the Gutenberg Project with respect to $c$.}
\label{fig:gutenberg-ctxlen}
\end{minipage}
\end{figure}

In \ref{fig:donquixote} we reported the convergence of the correlation dimension with respect to the number of timesteps, $N$, and the context length $c$ for the book Don Quixote. Here, we provide the results for the entire Gutenberg corpus of 80 English books. Briefly, we observed the same convergence with respect to increasing $N$ or $c$ as in the case of Don Quixote.

\subsection{Effect of Sequence Length}
\label{sec:N}

Different values of $N$ represent different lengths of the text fragment taken from a book to estimate the correlation dimension. As explained above, we discarded the first 50,000 words to avoid catalog information. Then, starting from word 50,001, we took text fragments with different numbers of words. The longest fragment had 150,000 words. 

Figure \ref{fig:gutenberg-N} shows the distributions of the correlation dimensions (vertical axis) for the 80 books with $N$ ranging from 150 to 150,000. A clear convergence to a dimension around 6.5 is visible as $N$ increases. The correlation dimension's variance between books is large for small $N$ but decreases gradually as $N$ increases to 150,000. When $N=150000$, the correlation dimension is $6.39\pm 0.40$, as mentioned in the main text. 

\subsection{Effect of Context Length}
\label{sec:ctxlen}

The context length $c$ defined in \ref{eq:ctxlen} determines how many previous words are visible to the language model. For LLMs, $c$ cannot be infinitely large. We set $c=512$ for all textual data in this letter. 

It is of great interest how the context length $c$ affects the correlation dimension of natural language. A small $c$ value represents a short-memory approximation to natural language, which has been shown to have long-term memory. In particular, $c=1$ corresponds to a Markov process (forced by a real text).

Figure \ref{fig:gutenberg-ctxlen} shows the distribution of correlation dimensions for the 80 books with respect to increasing $c$. When $c=1$, the correlation dimensions are close to 4.0 with small variance. The dimension increases to around 6.5 when $c$ exceeds 16. At $c=512$, the correlation dimension is $6.39\pm 0.40$, as mentioned in the main text. This difference between the results with $c=1$ and $c=512$ further validates our finding in this letter that the self-similarity of language is related to the existence of long memory in text.

\vfill

\newpage
\section{Comparison with Random Processes}
\label{sec:randomprocs}
We tested three random processes in the statistical manifold $S$ of multinoulli distributions over $\vocab=\{1,2,\cdots,K\}$, a vocabulary of $K$ words.

\subsection{Uniform White Noise}
\label{sec:random}
\begin{figure}[htbp]
\centering \includegraphics[width=0.35\linewidth]{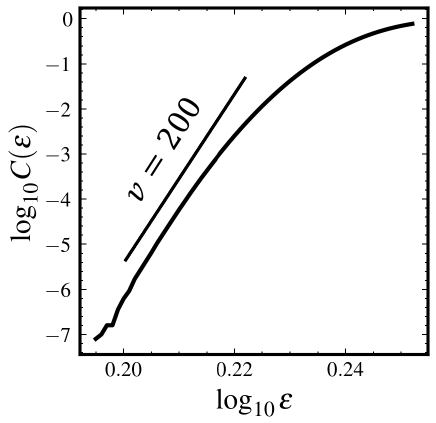}
\caption{Correlation integral curve of a uniform white-noise process in a statistical manifold. The process had $N=10000$ timesteps. $C(\varepsilon)$ was calculated using \ref{eq:corrintegral} and the Fisher-Rao distance in \ref{eq:fisher-rao}. }
\label{fig:uniform}
\end{figure}

The first random process was a uniform white-noise process. The manifold $(S, d_\text{FR})$ is isometric to the positive orthant of a hypersphere with coordinates $(\sqrt{\theta_1}, \sqrt{\theta_2}, \cdots, \sqrt{\theta_K})$. Therefore, a uniform white-noise process can be generated by uniformly drawing samples on this orthant, which can be done efficiently by normalizing Gaussian-distributed random vectors. We consider the uniform white-noise process in $S$ to be an analog of a Gaussian white-noise process in a Euclidean space, because each one is the ``entropy-maximizing'' distribution in its corresponding space.

As shown in Figure \ref{fig:uniform}, the correlation dimension of such a uniform white-noise process is large at over 100, which is identical to that of a Gaussian white-noise process in a Euclidean space.

\vfill

\subsection{Symmetric Dirichlet White Noise}
\label{sec:sym-dirichlet}

\begin{figure}[H]
\centering
\begin{minipage}{\linewidth}
\includegraphics[width=\linewidth]{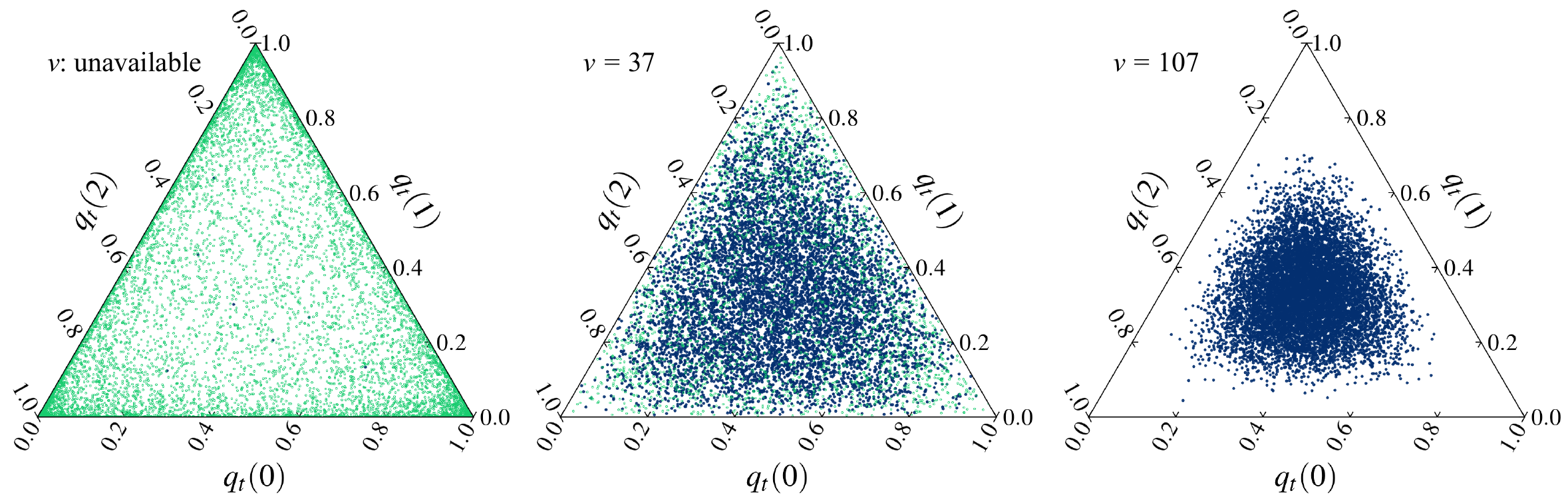}
\end{minipage}
\caption{Illustration of i.i.d. samples drawn from a symmetric
Dirichlet distribution with increasing parameter values of
$\alpha_k=1/K$ (left), $5/K$ (middle), and $20/K$ ($k=1,2,\cdots,K$).
The samples were probability vectors $p_t\in\mathbb{R}^{50257}$ and
were projected to $q_t\in\mathbb{R}^3$ via
\ref{eq:groupprob}, as detailed in Section \ref{sec:dimreduce}. The
points are blue for $H(p_t)\geq 3.0$ and green otherwise. The
estimated correlation dimension of the blue points is shown in each
plot's upper-left corner. For the left plot, the correlation dimension
is unavailable because there are almost no blue points.}
\label{fig:dirichlet-high-entropy}
\end{figure}

The second random process was a symmetric Dirichlet white-noise
process. In Section \ref{sec:local}, we showed that i.i.d. samples
from a Dirichlet distribution reproduce the local self-similarity of
language. A natural question is whether global self-similarity is also
reproducible by a Dirichlet distribution. To show that this is not the
case, we generated i.i.d. samples from several symmetric Dirichlet
distributions. For a symmetric Dirichlet distribution
$\text{Dir}(\bm{\alpha})$ ($\bm{\alpha}\in\mathbb{R}_+^K$), the
parameters $\alpha_k$ ($k=1,\cdots,K$) were set identically through
$k$. Three different symmetric Dirichlet distributions with parameters
$\alpha_k=1/K$, $5/K$, and $20/K$ ($k=1,2,\cdots,K$) were generated.
For consistency with the English GPT2 models, $K$ was set to 50257.
Time series of these Dirichlet distributions were produced as
$p_t\in\mathbb{R}^{50257}$ and then projected to $q_t\in\mathbb{R}^3$
via Formula (\ref{eq:groupprob}). Figure
\ref{fig:dirichlet-high-entropy} shows a plot map in these three
dimensions. The points are colored blue for $H(p_t)\geq 3.0$ and green
otherwise.

As seen in the figure, the correlation dimension depended on $\alpha_k$. For a larger $\alpha_k$, in the middle and right plots, the high-entropy blue points produced large correlation dimensions of 37 and 107, respectively, which indicates that these processes were barely self-similar. For a smaller $\alpha_k$ (left plot), all the points are green, and the correlation dimension of the high-entropy points is thus unavailable. These results show that simple i.i.d. samples from a Dirichlet distribution cannot reproduce the global fractals of natural language.

\subsection{Barab\'asi-Albert Network and A Fractional Variant}
\label{sec:barabasi-albert}

\begin{figure}[htbp]
\centering
\begin{minipage}{0.4\linewidth}
  \centering
  \includegraphics[width=\linewidth]{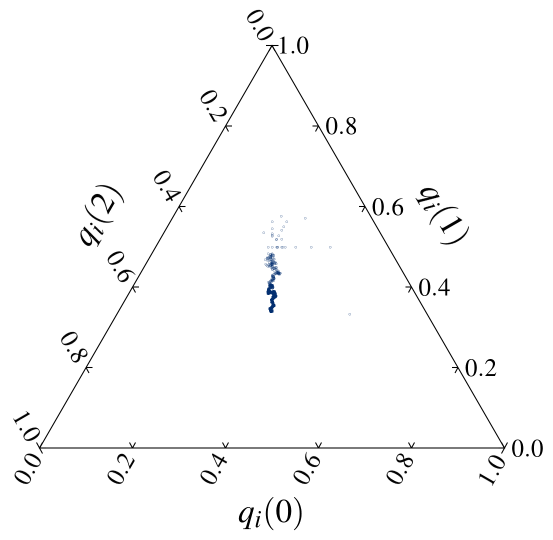}
  (a)
\end{minipage} \hspace{0.02\linewidth}
\begin{minipage}{0.4\linewidth}
  \centering
  \includegraphics[width=\linewidth]{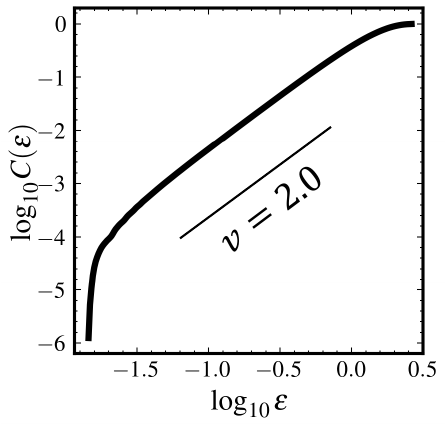}
  (b) 
\end{minipage}
\vskip 1em
\begin{minipage}{0.4\linewidth}
  \centering
  \includegraphics[width=\linewidth]{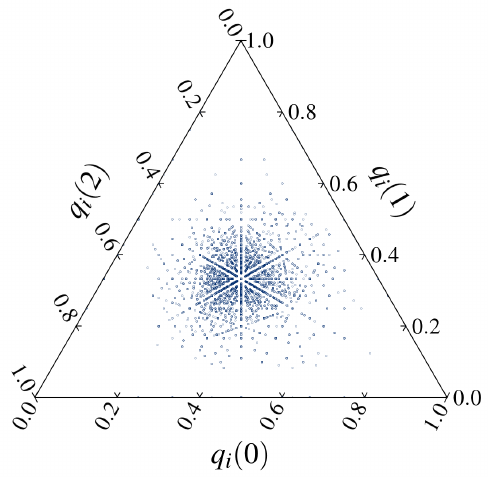}
  (c)
\end{minipage} \hspace{0.02\linewidth}
\begin{minipage}{0.4\linewidth}
  \centering
  \includegraphics[width=\linewidth]{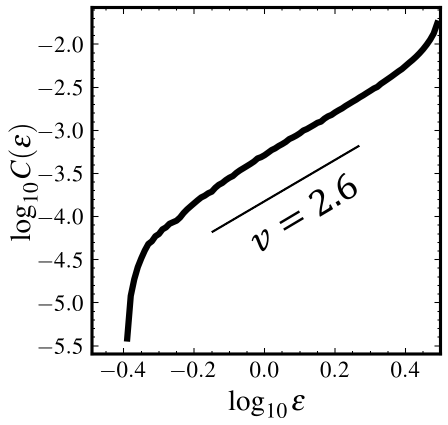}
  (d)
\end{minipage}
\vskip 1em
\begin{minipage}{0.4\linewidth}
  \centering
  \includegraphics[width=\linewidth]{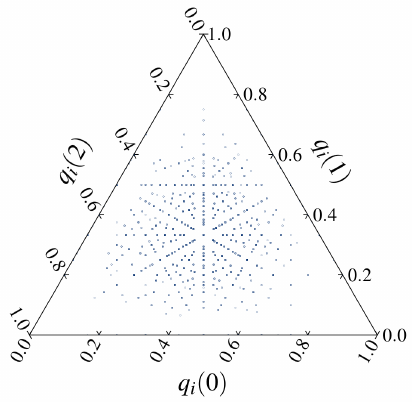}
  (e)
\end{minipage} \hspace{0.02\linewidth}
\begin{minipage}{0.4\linewidth}
  \centering
  \includegraphics[width=\linewidth]{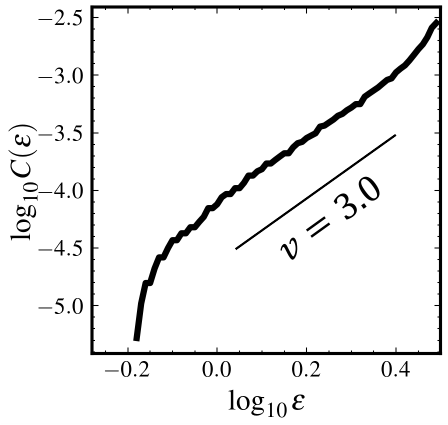}
  (f)
\end{minipage}
\caption{
Trajectory (left) and correlation integral curve (right) of (a-b) a
Barab\'asi-Albert model, (c-d) a FAPA model with $\kappa=0.005$, and
(e-f) a FAPA model with $\kappa=0.005$. The trajectories $\{q_t\}$ were
acquired from $\{p_t\}$ by the dimension-reducing mapping in
\ref{eq:groupprob}.
$m=1$ for all models.
}
\label{fig:barabasi-albert}
\end{figure}

The third random process was the generation process of a
Barab\'asi-Albert (BA) network \citep{barabasi1999emergence}. This
process is driven by a property called ``preferential attachment'' and
is a special case of the Simon model \citep{simon1955class}. A BA
network starts as a small network of $m_0$ nodes, where each node is
randomly connected to $m \geq m_0$ nodes. At each iteration, a new
node is added to the network and randomly connected to an existing
node in proportion to the nodes' (instantaneous) degrees. Consider the
generation process of a BA network with $K$ nodes, starting from $m_0$
nodes. For $K-m_0$ times, a new node is connected to an existing node,
following a multinoulli distribution over all $K$ nodes, including
those that have not yet been appended to the network at time $t$.

We take the connection probability as the ``next-word'' probability $p_t$. Therefore, at time $t$ ($t=1,2,\cdots,K-m_0$), the connection probability $p_t(k)$ is calculated as follows:
\begin{equation}
 p_t(k) = \left\{\begin{matrix}
 \dfrac{\text{deg}_t(k)}{\sum_{k'=1}^{m_0+t-1} \text{deg}_t(k')} &
 \text{if}~k \leq m_0+t-1 \\ 0 & \text{otherwise},
 \end{matrix}\right.
\label{eq:ba}
\end{equation}
where $\text{deg}_t(k)$ is the degree of node $k$ at time $t$.

Because the evolution of a BA network depends only on the nodes'
current numbers of connections, the connection probability
distribution $p_t$ identifies the state of the network as a dynamical
system.

In addition to the standard BA model, we considered a fractional
variant. This non-standard BA, called {\em fractional
anti-preferential attachment} (FAPA), is based on
\citet{rak2020fractional}. Unlike the standard BA, the probability
distribution $p_t$ over all existing node is truncated, and thus only
a fraction of the nodes are allowed to be connected to new nodes. In
a FAPA, the truncation is ``anti-preferential,'' that is, low-degree
nodes remain in this truncation while all high-degree nodes are
assigned probability mass zero.

The truncation rate is controlled by a ratio $\kappa$, and
Formula (\ref{eq:ba}) is modified as follows:
\begin{equation}
 p_t(k) = \left\{\begin{matrix}
 \dfrac{\text{deg}_t(k)}{\sum_{k'\in L_t} \text{deg}_t(k')} &
 \text{if}~k \in L_t \\
 0 & \text{otherwise},
 \end{matrix}\right.
 \label{eq:fapa}
\end{equation}
where $L_t$ is the set of nodes which are among the $|L_t|=\kappa
(m_0+t-1)$ nodes with the lowest degrees.

Figure \ref{fig:barabasi-albert} demonstrates the simulated processes
of the standard BA model (top row) and two FAPA models with
$\kappa=0.005$ (middle row) and $\kappa=0.002$ (bottom row),
respectively. All the models started from a single node ($m_0=1$); at
every iteration, one node is added to the network (i.e., $m=1$);
10,000 iterations were simulated. Hence, $p_t$ is a multinoulli
distribution over 10,000 classes (nodes).

The left-hand side of Figure \ref{fig:barabasi-albert} displays the
trajectories with the dimension reduced to 2D by using the
dimension-reduction mapping in
\ref{eq:groupprob}, and the right-hand side shows their
corresponding correlation integral curves calculated with
\ref{eq:corrintegral}.

A scaling structure can be seen in all three trajectories. The
correlation dimension was 2.0 for the standard BA (top row), 2.6 for
the FAPA with $\kappa=0.005$, and 3.0 for $\kappa=0.002$. The FAPA
model with the smaller $\kappa$ value showed a larger correlation
dimension.

The BA model is considered as a correspondent to the random walk model
in Euclidean spaces, and the FAPA model to the fractional Brownian
motion (fBm). The parameter $\kappa$ of the FAPA model is analogous to
the Hurst exponent of fBm models.

\vfill
\section{Using Euclidean Distance Metric}
\label{sec:euclid}

\begin{figure}[H]
\centering
  \centering
  \begin{minipage}{0.32\linewidth}
  \includegraphics[width=\linewidth]{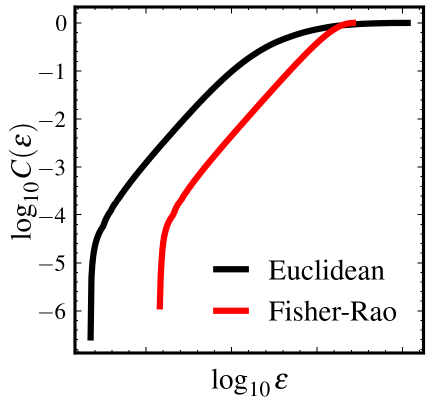}
  (a) BA model
  \end{minipage}
  \begin{minipage}{0.32\linewidth}
  \includegraphics[width=\linewidth]{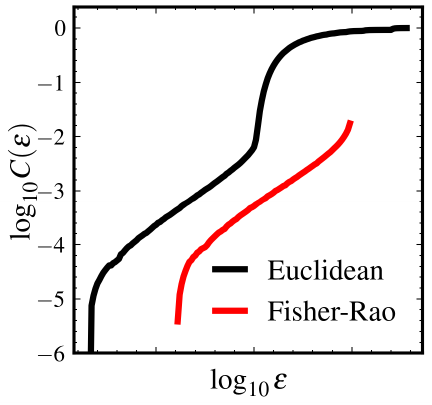}
  (b) FAPA ($\kappa=0.005$)
  \end{minipage}
  \begin{minipage}{0.32\linewidth}
  \includegraphics[width=\linewidth]{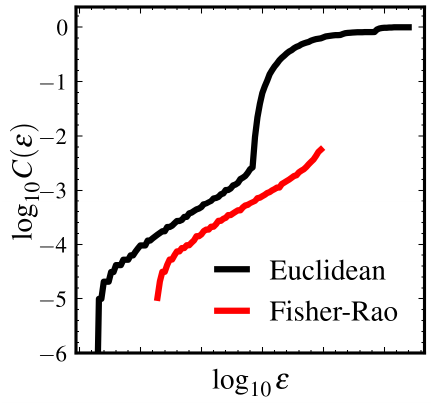}
  (c) FAPA ($\kappa=0.002$)
  \end{minipage}
  \\ \vskip 1em
  \begin{minipage}{0.32\linewidth}
  \includegraphics[width=\linewidth]{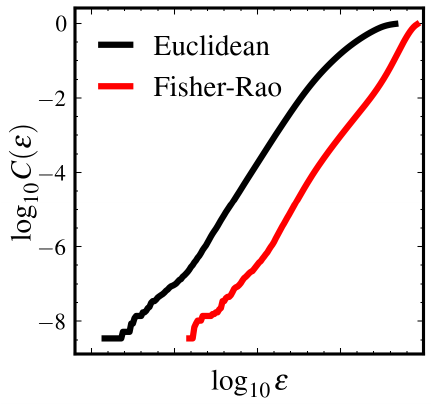}
  (d) {\em Don Quixote}
  \end{minipage}
  \begin{minipage}{0.32\linewidth}
  \includegraphics[width=\linewidth]{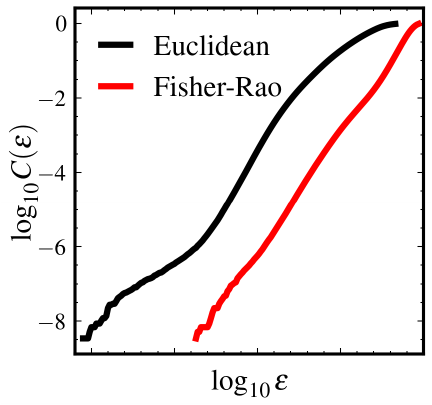}
  (e) {\scriptsize \em History of Modern Philosophy}
  \end{minipage}
  \begin{minipage}{0.32\linewidth}
  \includegraphics[width=\linewidth]{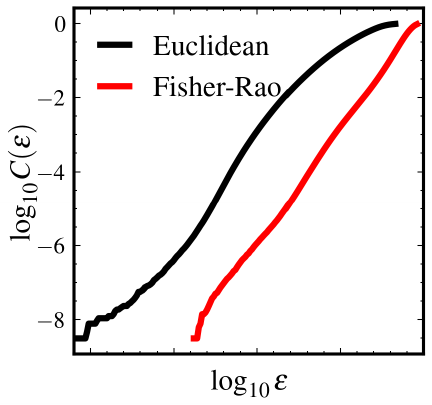}
  (f) {\em ``The Man Who Laughs''}
  \end{minipage}
\caption{
Correlation integral curves depicted using either the Fisher-Rao
distance metric (in red), or the conventional Euclidean distance
metric (in black). Figures (a-c) present comparisons for three models
outlined in Section \ref{sec:barabasi-albert} and Figure
\ref{fig:barabasi-albert}: (a) a standard BA model, (b) a FAPA model
with $\kappa=0.005$, and (c) a FAPA model with $\kappa=0.002$. Figures
(d-f) present curves for three literary sequences: (d) {\em Don
Quixote} by Miguel de Cervantes, (e) {\em History of Modern
Philosophy} by Richard Falckenberg, and (f) {\em The Man Who Laughs}
by Victor Hugo. The black curves have been shifted horizontally for a
clearer comparative illustration alongside the red curves. }
\label{fig:euclid-vs-fisher-rao}
\end{figure}

This section explores an alternative method for analyzing the
language sequences $\{p_t\}$ by measuring the distances between $p_t$
and $p_s$ ($\forall t,s$) using the conventional Euclidean distance
metric, $\lVert p_t-p_s\rVert$. This approach contrasts with the
Fisher-Rao distance method that is outlined in
\ref{eq:fisher-rao} and used throughout this paper.

Figure \ref{fig:euclid-vs-fisher-rao} illustrates the correlation
integral curves for six distinct sequences $\{p_t\}$. Each graph
depicts a single sequence, showing two curves: one generated by using
the Euclidean distance metric (in black) or the other using the
Fisher-Rao distance metric (in red). Figures
\ref{fig:euclid-vs-fisher-rao}(a-c) present simulated processes from
the three BA models discussed in Section \ref{sec:barabasi-albert}:
(a) the standard BA model, (b) a FAPA model with $\kappa=0.005$, and
(c) a FAPA model with $\kappa=0.002$. Figures
\ref{fig:euclid-vs-fisher-rao}(d-f) feature three language sequences
extracted from text segments of (d) {\em Don Quixote} by Miguel de
Cervantes (Gutenberg NO. 996), (e) {\em History of Modern Philosophy}
by Richard Falckenberg (Gutenberg NO. 11100), and (f) {\em The Man Who
Laughs} by Victor Hugo (Gutenberg NO. 12587).

Observations reveal that for the simple models depicted in (a-c), the
slopes obtained using Euclidean distances align with those from the
Fisher-Rao distance. In (b) and (c), a non-linear region emerges at
larger values of $\varepsilon$ when employing the Euclidean distance,
an effect not observed with the Fisher-Rao metric. However, for the
actual language sequences displayed in (d-f), the Euclidean-based
curves (in black) demonstrate compromised linearity compared with the
Fisher-Rao-based curves (in red), making it challenging to pinpoint
linear regions for curves in (e) and (f).

These findings highlight two critical insights. Firstly, there is a
notable consistency in the correlation dimension values between
statistical manifolds and traditional Euclidean spaces, especially for
well structured simple processes like the BA models. Thus, the
universally observed dimension value of 6.5 for natural language can
be analogously compared with random processes defined in Euclidean
spaces, such as fractional Brownian motion with a Hurst exponent of
$H=0.15$. This consistency, however, may be limited to $\{p_t\}$
sequences situated near the central region of the statistical
manifold---where the global fractal structure of language is located,
a condition met by the three BA models.

Secondly, the self-similarity property of language, as observed in the
global fractal, becomes significantly more pronounced and apparent
when analyzed within a statistical manifold utilizing the Fisher-Rao
distance metric, compared with a traditional English space. This
distinction is expected, given that the Euclidean distance between two
probability vectors lacks mathematical relevance, a point previously
discussed in the main text.

\vfill
\section{Correlation dimension of music data}
\label{sec:musicgen}

\begin{figure}[htbp]
\centering
\includegraphics[width=0.35\linewidth]{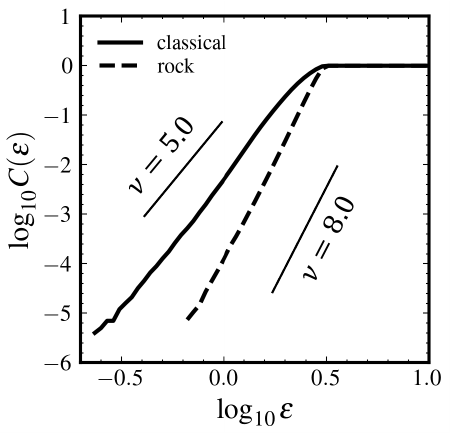} \\
\vskip 1em
\includegraphics[width=0.75\linewidth]{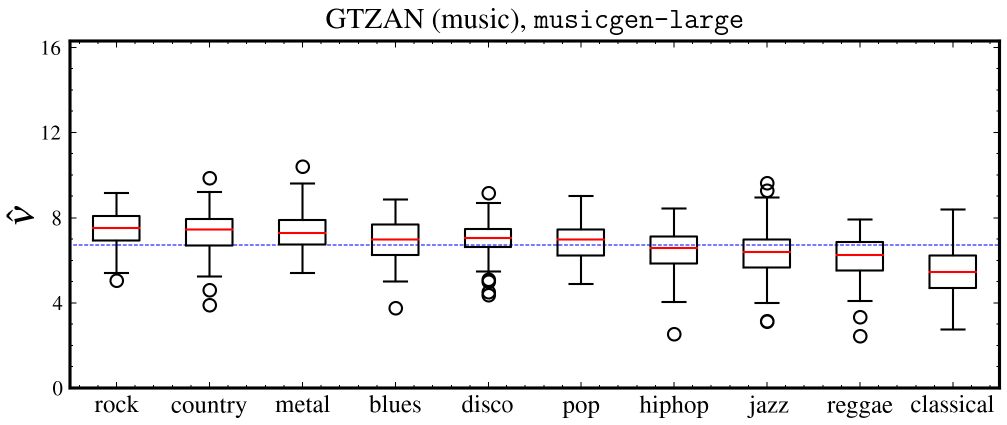}
\caption{ (Upper) Correlation integral curves for two music pieces categorized as ``classical'' and ``rock'' in the \texttt{GTZAN} dataset. (Lower) Correlation dimensions of compressed music pieces categorized by genre. The dashed blue line indicates the average over all genres.}
\label{fig:music-genre}
\end{figure}

As mentioned in Section \ref{sec:data}, each music piece was encoded as a sequence of discrete codes from a ``vocabulary'' of 2048 ``words.'' Then, we acquired the system states $p_t$ for each timestep $t$ ($t=1,\cdots,N$) by using an LLM trained for music generation. In this letter, we used the \texttt{musicgen} model \citep{copet2023simple} of size \texttt{large} (with $3\times 10^9$ parameters). The \texttt{musicgen} model works in the same way as GPT2. Every music piece in the \texttt{GTZAN} dataset was compressed into a discrete sequence of 1,500 timesteps, i.e., $N=1500$.

The context length $c$ defined in \ref{eq:ctxlen} was not limited: we used the whole context without approximation. Unlike for texts, which have tens of thousands of timesteps, the music sequences only had 1,500 timesteps, which is within the \texttt{musicgen} model's limitation on the context length.

In estimating the correlation dimension $\hat{\nu}$ for the music data, we excluded the low-entropy timesteps from $\{p_t\}$. The maximum-probability threshold $\eta$ was set to 0.5, the same value used for characterizing texts. 

Figure \ref{fig:music-genre} (upper) shows the correlation integral curves for two typical music pieces in the dataset. Linear scaling is clearly visible throughout the whole regions for both pieces. The music piece categorized as ``classical'' had a correlation dimension (i.e., the slope) around 5.0, while the piece categorized as ``rock'' had a dimension of 8.0. Next, Figure \ref{fig:music-genre} (lower) shows the distribution of the correlation dimensions of all 1000 pieces, grouped by genre. Rock music had the highest correlation dimension on average, while classical music had the lowest.

This discrepancy in the correlation dimension between music genres aligns well with our perception that classical music is less random than rock or metal music. Nevertheless, as music pieces may show large variety in terms of many other factors (e.g., timbre), we are less likely to observe a universal value of the correlation dimension for music than for language.

\end{document}